\newtheorem{theorem}{Theorem}
\newtheorem{proposition}{Proposition}
\newtheorem{definition}{Definition}
\begin{document}

\title{Implementing Ranking-Based Semantics in ConArg: a Preliminary Report}
\author{
	\IEEEauthorblockN{Stefano Bistarelli}
	\IEEEauthorblockA{\textit{Dep. of Mathematics and Computer Science} \\
		\textit{University of Perugia}\\
		Perugia, Italy \\
		stefano.bistarelli@unipg.it}
	\and
	\IEEEauthorblockN{Francesco Faloci}
	\IEEEauthorblockA{\textit{Dep. of Mathematics and Computer Science} \\
		\textit{University of Perugia}\\
		Perugia, Italy \\
		francesco.faloci@studenti.unipg.it}
	\and
	\IEEEauthorblockN{Carlo Taticchi}
	\IEEEauthorblockA{\textit{Dep. of Computer Science} \\
		\textit{Gran Sasso Science Institute}\\
		L'Aquila, Italy \\
		carlo.taticchi@gssi.it}
}

\maketitle

\begin{abstract}
	ConArg is a suite of tools that offers a wide series of applications for dealing with argumentation problems. In this work, we present the advances we made in implementing a ranking-based semantics, based on computational choice power indexes, within ConArg. Such kind of semantics represents a method for sorting the arguments of an abstract argumentation framework, according to some preference relation. The ranking-based semantics we implement relies on Shapley, Banzhaf, Deegan-Packel and Johnston power index, transferring well know properties from computational social choice to argumentation framework ranking-based semantics.
\end{abstract}

\begin{IEEEkeywords}
	argumentation, ranking-based semantics, tool, power indexes, shapley value, banzhaf index
\end{IEEEkeywords}


\section{Introduction}

Argumentation Theory is a field of Artificial Intelligence that provides formalisms for reasoning with conflicting information, and many different areas, ranging from healthcare to systems optimisation, make use of notions coming from the research in computational argumentation. To give just a few examples, from a social science perspective, argumentation is used to model the human behaviour in the simulation of a rational population\cite{Gabbriellini2013AbstractAF}; the problem of the breast cancer recurrence prediction is addressed in~\cite{DBLP:conf/cbms/LongoKH12} through an argumentative process that represents clinical evidence; finally, the authors in~\cite{DBLP:conf/aaai/CyrasLMT19} define argumentative explanations for why a schedule is (or is not) feasible and efficient.

Arguments from a knowledge base are modelled by Dung as nodes in a directed graph, called Abstract Argumentation Framework (AF in short), where edges represent a binary attack relation between arguments. In a framework, it is possible to select the sets of arguments that are not in conflict with each other. Many \textit{semantics} (i.e., criteria through which refine this selection) have been defined in order to establish different kinds of acceptability (see~\cite{baroni_introduction_2011} for a survey). All these semantics return two disjoint sets of arguments: ``accepted'' and ``not accepted''. The sets of accepted arguments with respect to a certain semantics are called ``extensions'' of that semantics.
An additional level of acceptability is introduced by Caminada with the reinstatement labelling, a kind of semantics that marks as rejected elements attacked by accepted arguments, and undecided the arguments that can be neither accepted nor rejected.

Dividing the arguments into just three partitions is still not sufficient when dealing with very large AFs, where one needs to limit the choice on a restricted number of selected arguments~\cite{leite_social_2011}. For this reason, a different family of semantics can be used for obtaining a broader range of acceptability levels for the arguments. Such semantics, called ranking-based, have been studied in many works, as~\cite{amgoud_ranking-based_2013,besnard_logic-based_2001,bistarelli_cooperative-game_2018,leite_social_2011,matt_game-theoretic_2008}, each focusing on a different criterion for identifying the best arguments in an AF. The idea is to assign a value to each argument through an evaluation function, so to obtain a total order over the arguments (a ranking indeed).
Although some works that map argumentation into game theory already exist, like the one in~\cite{matt_game-theoretic_2008} that introduces a new argument strength measure, to date, there is no relation between ranking-based semantics and classical ones by Dung/Caminada~\cite{dung_acceptability_1995,caminada_issue_2006}.

In this work, we present a ranking-based semantics which exploits power indexes, together with a tool that implements it. Contrary to other ranking-based semantics, ours has a strong connection with classical semantics, that are used as parameters for the evaluation of the arguments.  We rely on power indexes for establishing the ranking of the arguments since they are a very well-known concept in the fields of economics and computational social choice, where they are successfully adopted in many applications involving the fair division of costs or benefits.
This paper extends~\cite{bistarelli_tool_2019} and~\cite{bistarelli_cooperative-game_2018}, where preliminary ideas of the Shapley Value semantics and its implementation was sketched. We provide here a general and deep study on the application of power indexes to AFs and a thorough description of the implementation of our ranking-based semantics inside the ConArg suite, together with an example of how the rankings are computed by the different power indexes. We also discuss the differences among the various indices and give properties that characterise our semantics.
In Section~\ref{sec:background} we report the background on labelling and ranking-based semantics, and then the necessary preliminary notions on power indexes. Section~\ref{sec:pisem} is devoted to the definition of the ranking-based semantics we use in our tool. Afterwards, Section~\ref{sec:tool} and Section\ref{sec:example} describe the implementation of the aforementioned semantics and provide a detailed example of how the tool works, respectively. Section~\ref{sec:conclusion} wraps up the paper with conclusive thoughts and ideas about future work.

\section{Preliminaries}\label{sec:background}
In this section, we introduce the concepts from Argumentation and Game Theory we use for developing our tool. Besides the main definitions for ranking-based semantics~\cite{amgoud_ranking-based_2013}, we also provide the notion of the labelling-based semantics~\cite{caminada_issue_2006} that we implement to identify the sets of acceptable arguments.
Then, we recall the definitions of the four different power indexes used for evaluating the arguments. Those power indexes are named after the authors that formalised them and are known as the Shapley Value and the Banzhaf, Deegan-Packel and Johnston Index~\cite{keith_encyclopedia_2011}.

\subsection{Argumentation}\label{sec:arg}
An \textit{Abstract Argumentation Framework}~\cite{dung_acceptability_1995} (AF in short) consists of a set of arguments and the relations among them. Such relations, which we call ``attacks'', are interpreted as conflict conditions that allow for determining the arguments in $A$ that are acceptable together (i.e., collectively).

\begin{definition}[AF]
	An Abstract Argumentation Framework is a pair $\langle A,F \rangle$ where $A$ is a set of arguments and  $R \subseteq A \times A$ is a binary attack relation on $A$.
\end{definition}


An argumentation \textit{semantics} is a criterion that establishes which are the acceptable arguments by considering the relations among them. Two leading characterisations can be found in the literature, namely \textit{extension-based}~\cite{dung_acceptability_1995} and \textit{labelling-based}~\cite{caminada_issue_2006} semantics. While providing the same outcome in terms of accepted arguments, labelling-based semantics permit to differentiate between three levels of acceptability, by assigning labels to arguments according to the conditions stated in Definition~\ref{def:reinstatement}.

\begin{definition}[Reinstatement Labelling]\label{def:reinstatement}
	Let $F = \langle A,$ $R \rangle$ be an AF and $\mathbb{L} = \{in,out,undec\}$. A labelling of $F$ is a total function $L : A \rightarrow \mathbb{L}$. We define $in(L) = \{a \in A \mid L(a) = in \}$, $out(L) = \{a \in A \mid L(a) = out\}$ and $undec(L) = \{a \in A \mid L(a) = undec\}$.
	We say that $L$ is a reinstatement labelling if and only if it satisfies the following conditions:
	
	\begin{itemize}
		\item $\forall a,b \in A$, if $a \in in(L)$ and $(b,a) \in R$ then $b \in out(L)$;
		\item $\forall a \in A$, if $a \in out(L)$ then $\exists b \in A$ such that $b \in in(L)$ and $(b,a) \in R$.
	\end{itemize}
	
\end{definition}

The labelling obtained through the function in Definition~\ref{def:reinstatement} can be then analysed in terms of Dung's semantics~\cite{dung_acceptability_1995}.

\begin{definition}[Labelling-based semantics]\label{def:labsemnatics}
	A labelling-based semantics $\sigma$ associates with an AF $F= \langle A, R \rangle$ a subset of all the possible labellings for F, denoted as $L_\sigma(\mathit{F})$. Let $L$ be a labelling of $F$, then $L$ is
	\begin{itemize}
		\item \textbf{conflict-free} if and only if for each $a \in A$ it holds that if $a$ is labelled \emph{in} then it does not have an attacker that is labelled \emph{in}, and if $a$ is labelled \emph{out} then it has at least one attacker that is labelled \emph{in};
		\item \textbf{admissible} if and only if the attackers of each \emph{in} argument are labelled \emph{out}, and each \emph{out} argument has at least one attacker that is \emph{in};
		\item \textbf{complete} if and only if for each $a \in A$, $a$ is labelled \emph{in} if and only if all its attackers are labelled \emph{out}, and $a$ is \emph{out} if and only if it has at least one attacker labelled \emph{in};
		\item \textbf{preferred}/\textbf{grounded} if $L$ is a complete labelling where the set of arguments labelled \emph{in} is maximal/minimal (with respect to set inclusion) among all complete labellings;
		\item \textbf{stable} if and only if it is a complete labelling and $\mathit{undec}(L) = \emptyset$.
	\end{itemize}
\end{definition}

The accepted arguments of $F$, with respect to a certain semantics $\sigma$, are those labelled $in$ by $\sigma$. 
We refer to sets of arguments that are labelled \textit{in}, \textit{out} or \textit{undec} in at least one labelling of $L_\sigma(F)$ with $in(L_\sigma)$, $out(L_\sigma)$ and $undec(L_\sigma)$, respectively\footnote{We just write $L_\sigma$ when the reference to $F$ is clear and unambiguous.}.
Given an argument $a \in F$, we say that $a$ is \textit{credulously accepted} with respect to a semantics $\sigma$ if it is labelled \textit{in} in at least one extension of $\sigma$. We say that $a$ is \textit{sceptically accepted} if it is labelled \textit{in} in all extensions of $\sigma$.

In order to further discriminate among arguments, \textit{ranking-based} semantics~\cite{bonzon_comparative_2016} can be used for sorting the arguments from the most to the least preferred.

\begin{definition}[Ranking-based semantics] A ranking-based semantics associates with any $F= \langle A, R\rangle$ a ranking $\succcurlyeq_{F}$ on $A$, where $\succcurlyeq_{F}$ is a pre-order (a reflexive and transitive relation) on $A$. $a \succcurlyeq_{F} b$ means that $a$ is at least as acceptable as $b$ ($a \simeq b$ is a shortcut for  $a \succcurlyeq_{F} b$  and  $b \succcurlyeq_{F} a$, and   $a \succ_{F} b$ is a shortcut for  $a \succcurlyeq_{F} b$ and  $b \not\succcurlyeq_{F} b$).
\end{definition}

A ranking-based semantics can be characterised through some specific properties that take into account how couples of arguments in an AF are evaluated for establishing their position in the ranking. We provide a list of the properties suggested in~\cite{amgoud_ranking-based_2013} and that we use in Section~\ref{sec:example} to discuss an example of how our tool can be used for both research and applicative purposes.

\begin{definition}[Isomorphism]
	An isomorphism $\iota$ between two AFs $F=\langle A,R \rangle$ and $F'=\langle A',R' \rangle$ is a bijective function $\iota: A \rightarrow A'$ such that $\forall a,b \in A$, $(a,b) \in R$ if and only if $(\iota(a),\iota(b)) \in R'$.
\end{definition}

We can characterise the role of an argument with respect to another one according to the length of the path between them: an odd path represents an attack, while an even path is considered as a defence.

\begin{definition}[Attackers and defenders~\cite{amgoud_ranking-based_2013}]\label{def:properties} 
	Let $F = \langle A, R\rangle$ be an AF and $a,b \in A$ and denote with $P(b,a)$ a path from $b$ to $a$. The multi-sets of \textbf{defenders} and \textbf{attackers} of $a$ are $R_n^+(a) = \{b~|~\exists P(b,a)$ with length $n\in 2\mathbb{N}\}$ and $R_n^-(a) = \{b~|~\exists P(b,a)$ with length $n\in 2\mathbb{N}+1\}$, respectively. $R_1^-(a) = R^-(a)$ is the set of direct attackers of $a$.
\end{definition}

Besides arguments alone, also sets of arguments can be compared. Two rules apply: the greater the number of arguments, the more preferred the group; in case of two groups with the same size, the more preferred the arguments in a group, the more preferred the group itself.

\begin{definition}[Group comparison~\cite{amgoud_ranking-based_2013}]
	Let $\geq_S$ be a ranking on a set of arguments $A$. For any $S_1, S_2 \subseteq A$, $S_1 \geq_S S_2$ is a group comparison if and only if there exists an injective mapping $f : S_2 \rightarrow S_1$ such that $\forall a \in S_2, f(a) \succcurlyeq a$. Moreover, $S_1 >_S S_2$ is a strict group comparison if and only if $S_1 \geq_S S_2$ and $(|S_2| < |S_1|) \text{ or } \exists a \in S_2, f(a) \succ a$.
\end{definition}

Below, we list the properties proposed in~\cite{amgoud_ranking-based_2013}.

\begin{definition}[Properties of ranking-based semantics] Given a ranking-based semantics $\sigma$, an AF $F=\langle A,R \rangle$ and two arguments $a, b \in A$, the following properties are defined.
	
	\textit{Abstraction}. For any isomorphism $\iota$ such that $F' = \iota(F)$, $a \succcurlyeq_{F}^\sigma b$ if and only if $\iota(a) \succcurlyeq_{F'}^\sigma \iota(b)$.
	
	\textit{Independence}. Let $cc(F)$ be the set of connected components in $F$. $\forall F' = \langle A',R' \rangle \in cc(F),$ $\forall a,b \in A'$, then $a\succcurlyeq_{F'}^\sigma b \Rightarrow a \succcurlyeq_{F}^\sigma b$.
	
	
	\textit{Self-contradiction}. $(a,a) \not\in R$ and $(b,b) \in R \Rightarrow a \succ^\sigma b$.
	
	\textit{Cardinality Precedence}. $|R_1^-(a)| < |R_1^-(b)| \Rightarrow a \succ^\sigma b$.
	
	\textit{Quality Precedence}. $\exists c \in R_1^-(b)$ such that $\forall d \in R_1^-(a)$, $c \succ^\sigma d \Rightarrow a \succ^\sigma b$.
	
	
	
	
	\textit{Non-attacked Equivalence}. $R^{-}(a)= \emptyset$ and $R^{-}(b) = \emptyset \Rightarrow a \simeq^\sigma b$.
	
	\textit{Totality}. $a \succcurlyeq^\sigma b$ or $b \succcurlyeq^\sigma a$.
	
\end{definition}

The ranking-based semantics we present has been implemented in ConArg\footnote{ConArg website: http://www.dmi.unipg.it/conarg/}, a web tool that implements various work we conduct in the field of Abstract Argumentation. The core component of the whole suite is the computational framework~\cite{bistarelli_conarg_2011}, based on Constraint Programming, that is able to solve different problems related to AFs. ConArg can be used for many purposes, as computing semantics, visualising Argumentation Frameworks (AF) together with the computed extensions, programming user application using a predefined AF library, and studying properties of semantics and AFs. Additional modules allow for dealing with weighted~\cite{bistarelli_conarg_2016} and probabilistic~\cite{bistarelli_probabilistic_2018} AFs.
Our semantics evaluates the arguments of an AF by using the notion of power index, that we describe in the following section.

\subsection{Power Indexes}\label{sec:voting}
In game theory, cooperative games are a class of games where groups of players (or agents) are competing to maximise their goal, through one or more specific rules. Voting games are a particular category of cooperative games in which the profit of coalitions is determined by the contribution of each player.
To identify the ``value'' brought from a single player to a coalition, power indexes are used to define a preference relation between different agents, computed on all the possible coalitions.
In our work, we use four among the most commonly used power indexes, namely the Shapley Value~\cite{shapley_contributions_1953,WINTER20022025}, the Banzhaf Index~\cite{banzhaf_weighted_1965}, the Deegan-Packel Index and the Johnston Index~\cite{keith_encyclopedia_2011}.

Every power index relies on a characteristic function $v: 2^{N}\rightarrow \mathbb{R}$ that, given the set $N$ of players, associates each coalition $S \subseteq N$ with a real number in such a way that $v(S)$ describes the total gain that agents in $S$ can obtain by cooperating with each other.
The expected marginal contribution of a player $i \in N$, given by the difference of gain between $S$ and $S \cup\{i\}$, is $v_{S_i} = v(S \cup \{i\})  - v(S)$.

The Shapley Value $\phi_i(v)$ of the player $i$, given a characteristic function $v$, is computed as:
\begin{equation} \label{eq:sv}
\phi_i(v)=  \sum_{S \subseteq N \setminus \{i\} }\frac{|S|! \; (|N| - |S| - 1)! }{|N|!} \;\; v_{S_i}
\end{equation}

The formula considers a random ordering of the agents, picked uniformly from the set of all $|N|!$ possible orderings. The value $|S|! \; (|N| - |S| - 1)!$ expresses the probability that all the agents in $S$ come before $i$ in a random ordering.

The second fair division scheme we use is the Banzhaf Index $\beta_i(v)$, which evaluates each player $i$ by using the notion of \textit{critical voter}: given a coalition $S \subseteq N \setminus \{i\}$, a critical voter for $S$ is a player $i$ such that $S \cup \{i\}$ is a winning coalition, while $S$ alone is not. In other words, $i$ is a critical voter if it can change the outcome of the coalition it joins.

\begin{equation} \label{eq:ban}
\beta_i(v) = \frac{1}{2^{|N|-1}} \sum_{S \subseteq N \setminus \{i\}  }  v_{S_i}
\end{equation}

The difference between the Shapley Value and the Banzhaf index is that the latter does not take into account the order in which the players form the coalitions.

Deegan and Packel assume that only minimal winning coalitions are formed, that they do so with equal probability, and that if such a coalition is formed it divides the (fixed) spoils of victory equally among its members. In order to avoid divisions by zero in the formula, we use the interpretation of~\cite{alonso-meijide_power_2015}: let's call $M(v)$ the set of minimal winning coalitions of the game (always assuming $\emptyset \in M(v)$), and $M_i(v)$ the subset of $M(v)$ formed by coalitions $S \subseteq N$ such that $i \in S$. The Deegan-Packel index $\rho_i(v)$ of a player $i \in N$ is computed as follows. 
\begin{equation} \label{eq:dp}
\rho_i(v) = \frac{1}{|M(v)|} \sum_{\substack{S \subseteq M_i(v) \setminus \{i\} \\ S\neq \emptyset}} \frac{v_{S_i}}{|S|}
\end{equation}

The last index we implement is the Johnston index~\cite{duran_computing_2003}. Based on the principle of critical vote, it differs from Banzhaf's for the fact that critical voters in winning coalitions are rewarded with a fractional score instead of one whole unit (that is the score is equally divided among all critical members of the coalition).
Let $\varkappa(S)$ denote the number of critical voters in a winning coalition $S$. The Johnston index $\gamma_i(v)$ of a player $i \in N$ is computed as follows.
\begin{equation} \label{eq:jo}
\gamma_i(v) = \sum_{\substack{S \subseteq N \setminus \{i\} \\ \varkappa(S) \geq 1}} \frac{v_{S_i}}{\varkappa(S)}
\end{equation}

Notice that the summation of the Equation~\ref{eq:jo} is only done over the coalitions in which there is at least one critical voter.
In the following section, we give the definition of our ranking-semantics based on power indexes, that we call ``PI-based semantics''.

\section{PI-Based Semantics}\label{sec:pisem}
In order to rank arguments of a framework through the use of a power index, we need, first of all, to define the characteristic function that evaluates the coalition formed by the arguments.

\begin{definition}[Characteristic function]\label{def:charfun} Let $F = \langle A,R \rangle $ be an AF, $\sigma$ a Dung semantics and $L_\sigma$ the set of all possible labellings on $F$ satisfying $\sigma$. For any $S\subseteq A$, the labelling-based characteristic functions $v^{I}_{\sigma,F}$ and $v^{O}_{\sigma, F}$ are defined as:
	
	\begin{align*}
	&v^{I/O}_{\sigma,F}(S) =
	\begin{cases}
	1, & \text{if } S \in \mathit{in/out}(L_\sigma) \\
	0, & \text{if } otherwise
	\end{cases}
	\end{align*}
	
\end{definition}

The function $v^{I}_{\sigma,F}(S)$ takes into account the acceptability of a set of arguments $S$ with respect to a certain semantics $\sigma$, assigning to such set a score equal to $1$ if there exists a labelling $L_\sigma$ in which all and only the arguments of $S$ are labelled \textit{in}. In other words, a set is positively evaluated by $v^{I}_{\sigma,F}$ only if it represents an extension for the semantics $\sigma$, and the higher the score of the power index, the better the rank of an argument. A second characteristic function, $v^{O}_{\sigma, F}(S)$, is then introduced to break possible ties in the final ranking. In the case two arguments of F have the same power index with respect to the function $v^{I}_{\sigma,F}$, we compare the evaluations obtained through $v^{O}_{\sigma, F}$, that considers the sets of arguments labelled \textit{out} by $\sigma$. This further evaluation has a negative interpretation: the higher the score according to $v^{O}_{\sigma, F}$, the worse the rank.

\begin{definition}[PI-based semantics]
	Let $F = \langle A,R \rangle$ be an AF, $\sigma$ a Dung semantics, $\pi \in \Pi:$\{$\phi,\beta,\rho,\gamma$\} a power index, and $v^{I}_{\sigma,F}$, $v^{O}_{\sigma,F}$ the characteristic functions.
	The PI-based semantics associates to $F$ a ranking $\succcurlyeq_{F}^{\pi}$ on $A$, defining a lexicographic order on the pairs $(v^{I}_{\sigma,F}, v^{O}_{\sigma,F})$ such that $\forall a,b \in A$, $a \succ_{F}^{\pi} b$ if and only if
	\begin{itemize}
		\item $\pi_a(v^I_{\sigma, F}) > \pi_b(v^I_{\sigma, F})$, or
		\item $\pi_a(v^I_{\sigma, F}) = \pi_b(v^I_{\sigma, F})$ and $\pi_a(v^O_{\sigma, F}) < \pi_b(v^O_{\sigma, F})$
	\end{itemize}
	and that $a \simeq_{F}^{\pi} b$ if and only if
	\begin{itemize}
		\item$\pi_a(v^I_{\sigma, F}) = \pi_b(v^I_{\sigma, F})$ and $\pi_a(v^O_{\sigma, F}) = \pi_b(v^O_{\sigma, F})$.
	\end{itemize}
\end{definition}

We study which properties among those in Definition~\ref{def:properties} are satisfied by the PI-semantics obtained through the Shapley Value, and which are not.

\begin{theorem}\label{th:prop_sat}
	Consider an AF $F=\langle A,R\rangle$, two arguments $a,b \in A$, a Dung semantics $\sigma$ and the power index $\phi$. The PI-based semantics satisfies the following properties:
	\begin{itemize}
		\item \textit{Abstraction}, \textit{Independence} and \textit{Totality} for any $\sigma \in \{$conflict-free, admissible, complete, preferred, stable$\}$
		\item \textit{Self-contradiction} only for $\sigma$ = conflict-free
		\item \textit{Non-attacked Equivalence} only for $\sigma \in \{$complete, preferred, stable$\}$
	\end{itemize}
	
	For any $\sigma \in \{$conflict-free, admissible, complete, preferred, stable$\}$, the PI-based semantics does not satisfy \textit{Cardinality Precedence} and \textit{Quality Precedence}.
\end{theorem}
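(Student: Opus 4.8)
The plan is to reduce every claim to the behaviour of the Shapley value $\phi$ on the two $0/1$ characteristic functions $v^I_{\sigma,F}$ and $v^O_{\sigma,F}$, using throughout the pairwise difference form of the Shapley value,
\[
\phi_a(v)-\phi_b(v)=\sum_{S\subseteq A\setminus\{a,b\}}\frac{|S|!\,(|A|-|S|-2)!}{(|A|-1)!}\bigl(v(S\cup\{a\})-v(S\cup\{b\})\bigr),
\]
whose coefficients are strictly positive. \emph{Totality} is then immediate: each argument receives a well-defined pair $(\phi_a(v^I_{\sigma,F}),\phi_a(v^O_{\sigma,F}))\in\mathbb{R}^2$, and the lexicographic order on $\mathbb{R}^2$ is total, so any two arguments are comparable. \emph{Abstraction} follows from the anonymity of $\phi$: an isomorphism $\iota$ sends $\sigma$-labellings to $\sigma$-labellings bijectively, hence $v^I_{\sigma,F}=v^I_{\sigma,F'}\circ\iota$ and $v^O_{\sigma,F}=v^O_{\sigma,F'}\circ\iota$; since $\phi$ is invariant under relabelling of players we obtain $\phi_a(v^I_{\sigma,F})=\phi_{\iota(a)}(v^I_{\sigma,F'})$ and likewise for $v^O$, so the whole lexicographic comparison is preserved.

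For the two property-specific positive claims I would feed structural facts about the labellings into the displayed identity. For \emph{Self-contradiction} under conflict-free semantics, a self-attacking $b$ can never be labelled \emph{in}, so $v^I_{\sigma,F}(S\cup\{b\})=0$ for every $S$, whereas $\{a\}$ is conflict-free so $v^I_{\sigma,F}(\{a\})=1$; every bracket $v^I_{\sigma,F}(S\cup\{a\})-v^I_{\sigma,F}(S\cup\{b\})$ is therefore nonnegative and strictly positive at $S=\emptyset$, giving $\phi_a(v^I_{\sigma,F})>\phi_b(v^I_{\sigma,F})$ and hence $a\succ^\sigma b$. For \emph{Non-attacked Equivalence} under complete, preferred and stable semantics, two unattacked arguments are labelled \emph{in} in every labelling and \emph{out} in none; consequently $v^I_{\sigma,F}(S\cup\{a\})=v^I_{\sigma,F}(S\cup\{b\})=0$ and $v^O_{\sigma,F}(S\cup\{a\})=v^O_{\sigma,F}(S\cup\{b\})=0$ for all $S\subseteq A\setminus\{a,b\}$, so both differences vanish termwise and $\phi_a=\phi_b$ for each function, i.e. $a\simeq^\sigma b$.

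\emph{Independence} is the step I expect to be the main obstacle. Since no attacks run between connected components, for each listed $\sigma$ a set is an extension of $F$ iff its restriction to every component is an extension of that component; hence $v^I_{\sigma,F}$ factorises as the product of the component characteristic functions, and likewise $v^O_{\sigma,F}$. Fixing $a,b$ in one component $F'$ and writing each coalition as $S=T\sqcup U$ with $T$ inside $F'$ and $U$ outside, the pairwise identity collapses to $\phi_a(v^I_{\sigma,F})-\phi_b(v^I_{\sigma,F})=\sum_t \delta_t\,q_t$, where $\delta_t=\sum_{|T|=t}\bigl(v^I_{\sigma,F'}(T\cup\{a\})-v^I_{\sigma,F'}(T\cup\{b\})\bigr)$ depends only on $F'$, while $q_t=\sum_u \frac{(t+u)!\,(|A|-t-u-2)!}{(|A|-1)!}\,W_u\ge 0$ aggregates the numbers $W_u$ of size-$u$ extensions contributed by the other components; inside $F'$ alone the same computation gives $\sum_t \delta_t\,p_t$ with weights $p_t$. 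The difficulty is that $q_t$ and $p_t$ need not be proportional, so the outside components reweight the coalition sizes, and I must rule out that this reweighting reverses the sign of $\sum_t \delta_t(\cdot)$. This is transparent in the favourable case where the two weight profiles stay proportional, so the order is literally preserved; the heart of the proof is to establish the same conclusion in general, after which the identical argument for $v^O$ yields $a\succcurlyeq^\sigma_{F'}b\Rightarrow a\succcurlyeq^\sigma_{F}b$.

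The negative statements I would settle with explicit small frameworks. Against \emph{Cardinality Precedence} (and, with a companion framework, \emph{Quality Precedence}) take an argument $a$ whose unique direct attacker is itself unattacked, so that $a$ is labelled \emph{in} in no extension under admissible/complete/preferred/stable, versus an argument $b$ with two mutually attacking attackers that $b$ counterattacks, so that $b$ does occur in some extension; then $\phi_b(v^I_{\sigma,F})>\phi_a(v^I_{\sigma,F})$ although $|R_1^-(a)|<|R_1^-(b)|$, so $a\not\succ^\sigma b$, and a separate conflict-free instance covers that semantics. For the failure of \emph{Self-contradiction} outside conflict-free, the framework with $(b,b)\in R$ and a second argument $a$ attacked by an unattacked argument makes both $a$ and $b$ absent from every \emph{in}-extension, so they tie on $v^I_{\sigma,F}$, while on $v^O_{\sigma,F}$ the self-attacker $b$ is favoured because it can never be labelled \emph{out}; this yields $b\succ^\sigma a$ in place of the required $a\succ^\sigma b$. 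Finally, to defeat \emph{Non-attacked Equivalence} under conflict-free and admissible, I would use two unattacked arguments, one of which attacks (conflict-free case), respectively defends against a common attacker (admissible case), a further argument that the other leaves untouched; this breaks the symmetry of the extension indicators and produces $\phi_a(v^I_{\sigma,F})\neq\phi_b(v^I_{\sigma,F})$, hence $a\not\simeq^\sigma b$.
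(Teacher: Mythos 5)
Your treatment of \emph{Totality}, \emph{Abstraction}, \emph{Self-contradiction} under conflict-free and \emph{Non-attacked Equivalence} under complete/preferred/stable is correct and in fact tighter than the paper's own one-sentence justifications: the pairwise-difference form of $\phi$ with strictly positive coefficients, combined with the observations that a coalition containing a self-attacker is never an \emph{in}-set (while $\{a\}$ is) and that every \emph{in}-set of a complete labelling contains all unattacked arguments (and no \emph{out}-set contains any), does yield the claimed strict inequality and the claimed termwise equality. The genuine gap is \emph{Independence}, which you isolate honestly and then leave open. Component-locality of the labellings gives the factorisation $v^I_{\sigma,F}(T\sqcup U)=v^I_{\sigma,F'}(T)\,v^I_{\sigma,F''}(U)$, but as you note this only turns $\phi_a(v^I_{\sigma,F})-\phi_b(v^I_{\sigma,F})$ into a reweighted version of $\phi_a(v^I_{\sigma,F'})-\phi_b(v^I_{\sigma,F'})$, and nothing you write excludes a sign flip when the size-aggregated differences $D_t=\sum_{|T|=t}\bigl(v^I_{\sigma,F'}(T\cup\{a\})-v^I_{\sigma,F'}(T\cup\{b\})\bigr)$ have mixed signs; equivalently, via multilinear extensions the $F$-difference is $\int_0^1 g(t)\,h(t)\,dt$ while the $F'$-difference is $\int_0^1 g(t)\,dt$ with $h\ge 0$, and multiplying by a nonnegative weight does not preserve the sign of an integral whose integrand changes sign. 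To finish you would have to show that $g$ (or the $D_t$) has constant sign for these particular characteristic functions, or treat the lexicographic tie-breaking separately and exhibit sign-preservation directly. For what it is worth, the paper's proof does not close this either: it merely asserts that each argument's labelling depends only on its own component and concludes, never engaging with the cross-component reweighting you identified.

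The negative half also remains unfinished. The paper settles \emph{Cardinality Precedence}, \emph{Quality Precedence}, the failure of \emph{Self-contradiction} outside conflict-free, and the failure of \emph{Non-attacked Equivalence} under conflict-free/admissible by exhibiting explicit frameworks; notably it needs three distinct counterexamples for \emph{Quality Precedence} (one for conflict-free/admissible, one for complete, one for preferred/stable), because no single framework refutes the property under every listed semantics. Your mechanisms are the right ones --- in particular, a self-attacker attacked by nothing else is never labelled \emph{out}, so the $v^O$ tie-break favours it over an argument that is \emph{out} everywhere, defeating \emph{Self-contradiction} for admissible, complete and preferred (under stable the two arguments merely tie, which still refutes the property). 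But ``a companion framework'' for \emph{Quality Precedence} and a deferred ``separate conflict-free instance'' for \emph{Cardinality Precedence} are placeholders rather than proofs: the non-satisfaction claims are only established once concrete attack relations are written down and the resulting Shapley values are actually computed.
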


\begin{proof}
	For each power index, characteristic function and semantics, we state if the properties are satisfied.
	\begin{itemize}
		
		\item \textit{Abstraction}. Any extension of every semantics $\sigma$ is computed starting from the set of attack relations among arguments, thus the ranking is preserved up to isomorphisms of the framework.
		
		\item \textit{Independence}. The semantics we propose computes the ranking starting from the sets of extensions of a chosen semantics $\sigma$. Since the labelling of each argument $a$ is determined by the other arguments in the same connected component of $a$, also the ranking between every pair of arguments $a$ and $b$ is independent of any other argument outside the connected component of $a$ and $b$.
		
		\item \textit{Self-contradiction}. Consider $\sigma$ = conflict-free. If $(a,a) \notin R$ and $(b,b) \in R$, we can state that
		\begin{center}
			$\exists E \subset A \wedge a \notin E: v^I_{\sigma,F}(E \cup \{a\}) - v^I_{\sigma,F}(E) > -1 ~\wedge$\\
			$\nexists E \subset A \wedge b \notin E: v^I_{\sigma,F}(E \cup \{b\}) - v^I_{\sigma,F}(E) > -1$
		\end{center}
		Thus $\phi_a(v^I_{\sigma,F}) > \phi_b(v^I_{\sigma,F})$ from which we conclude that $a \succ_{\sigma,F}^{\pi} b$ when $\sigma$ = conflict-free.
		In Figure~\ref{fig:ce3} we show a counterexample for the other cases.
		
		\item \textit{Cardinality Precedence}. The property is not satisfied for any $\sigma \in$ \{conflict-free, admissible, complete, preferred, stable\} with respect to $\phi$. Counterexample in Figure~\ref{fig:ce4}.
		
		\item \textit{Quality Precedence}. The property is not satisfied for any $\sigma \in$ \{conflict-free, admissible, complete, preferred, stable\} with respect to $\phi$. See Figures~\ref{fig:ce5},~\ref{fig:ce6} and~\ref{fig:ce7} for counterexamples.
		
		\item \textit{Non-attacked Equivalence}. Non-attacked arguments are labelled \textit{in} in every complete extension, thus, if two arguments $a,b \in A$ are non-attacked, then we have $\pi_a(v^I_{\sigma,F}) = \pi_b(v^I_{\sigma,F})$ and $\pi_a(v^O_{\sigma,F}) = \pi_b(v^O_{\sigma,F})$. Hence $a \simeq_{F}^{\pi} b$ when $\sigma$ = complete. Since all the preferred and stable extensions are also complete, \textit{Non-attacked Equivalence} holds for $\sigma \in \{$complete, preferred, stable$\}$. On the other hand, for $\sigma \in \{\text{conflict-free, admissible}\}$ the property is not satisfied (see the counterexample in Figure~\ref{fig:ce8}).
		
		\item \textit{Totality}. The Shapley Value associates a real number to every arguments of an AF, thus all pairs of arguments can be compared through the order of $\mathbb{R}$.
		
\end{itemize}\end{proof}

\begin{figure}[htb]
	\centering
	\includegraphics[height=1.1cm]{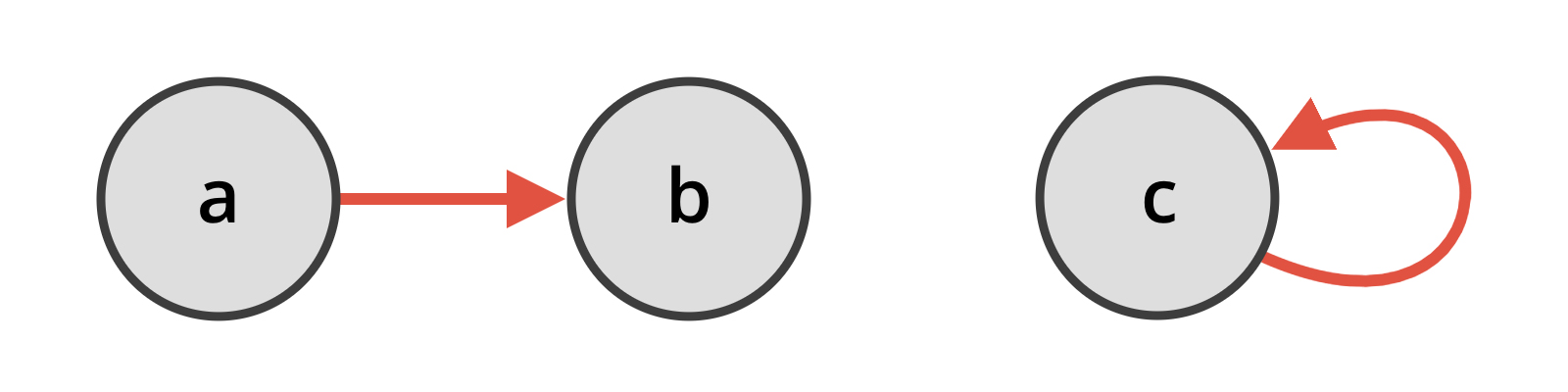}
	\caption{Counterexample for \textit{Self-contradiction}: we have $c \succ_{F}^{\phi} b$ when $\sigma \in \{$admissible, complete, preferred$\}$, and $b \simeq_{F}^{\phi} c$ when $\sigma$ = stable.}
	\label{fig:ce3}
\end{figure}

\begin{figure}[htb]
	\centering
	\includegraphics[height=2.2cm]{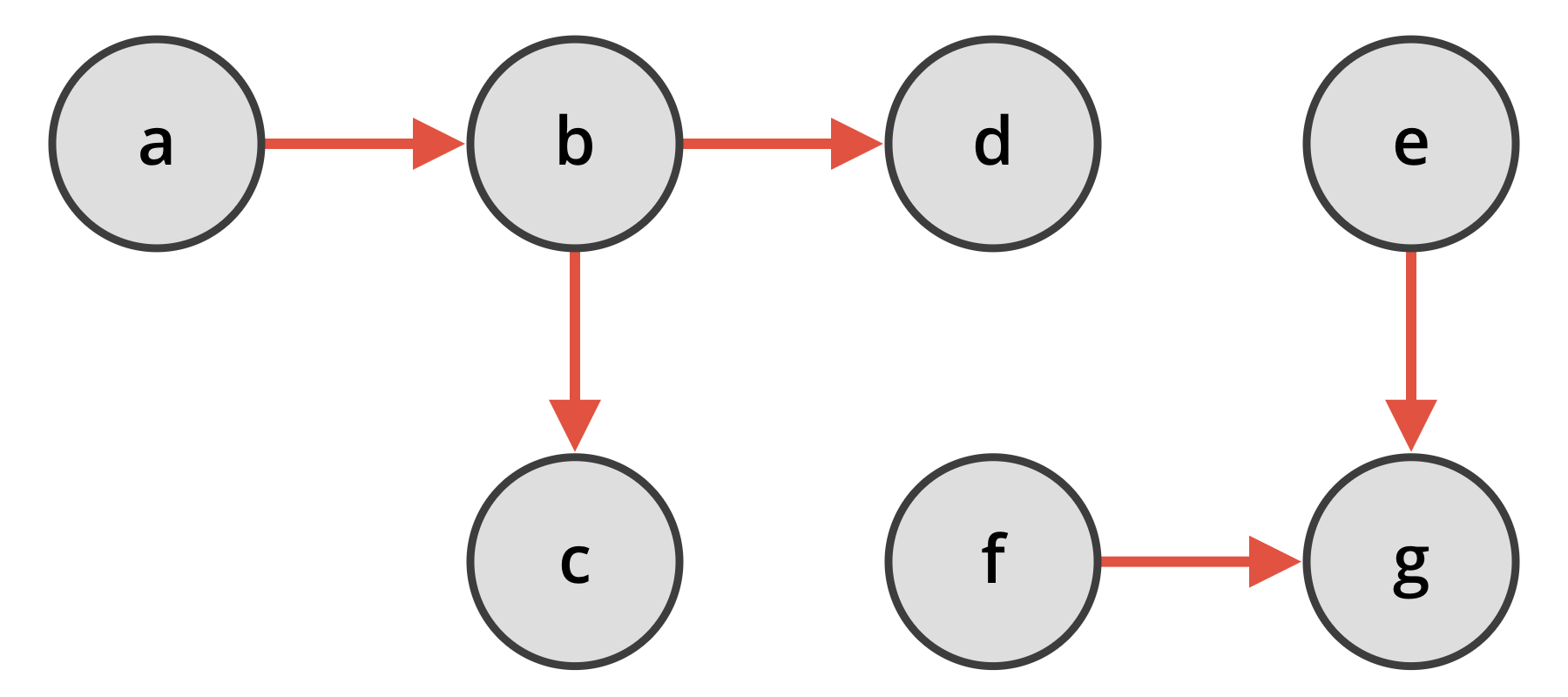}
	\caption{A counterexample for \textit{Cardinality Precedence}. The argument $g$ has more direct attackers than $b$. However, $g \succ_{F}^{\phi} b$ when $\sigma$ = conflict-free, while $b \simeq_{F}^{\pi} g$ when $\sigma \in \{$admissible, complete, preferred, stable$\}$.}
	\label{fig:ce4}
\end{figure}

\begin{figure}[htb]
	\centering
	\includegraphics[height=2.2cm]{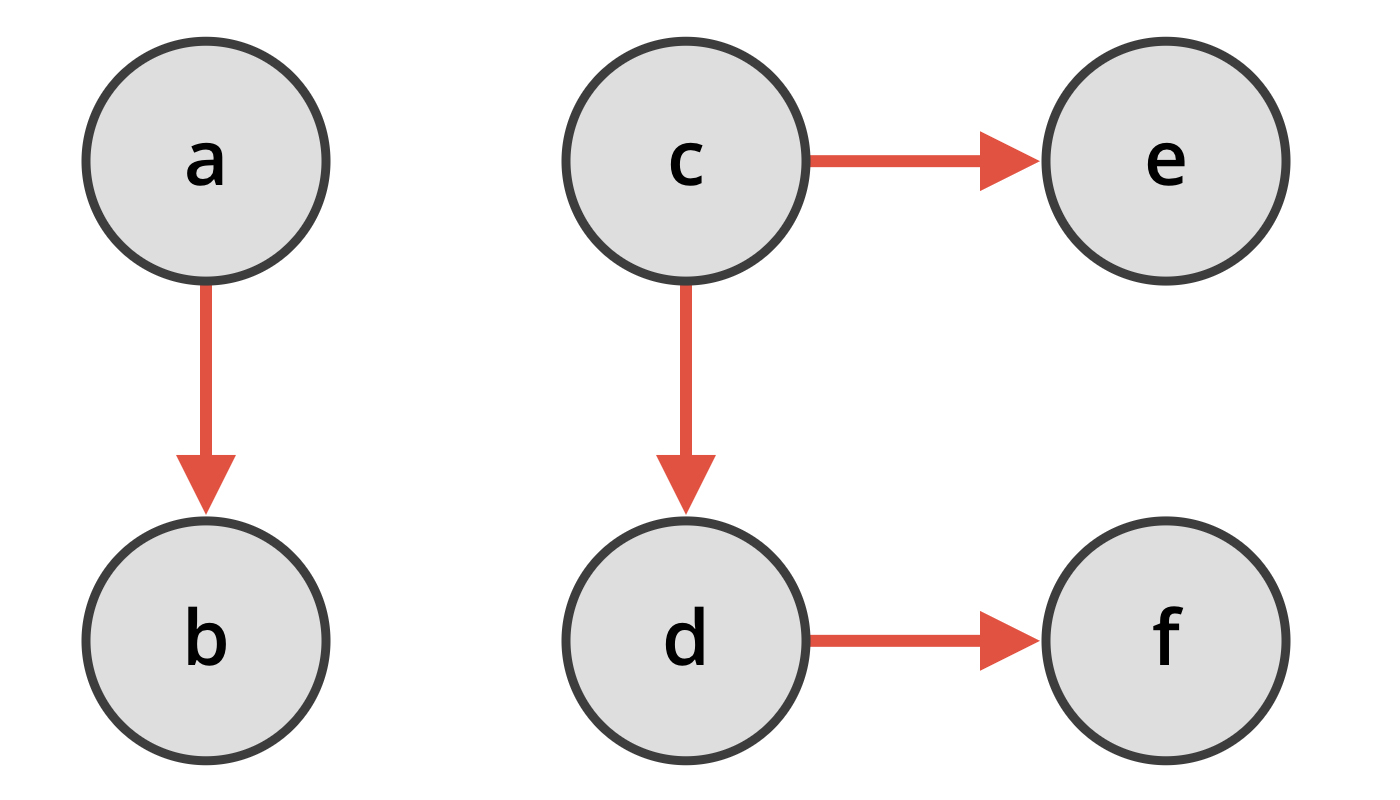}
	\caption{A counterexample for \textit{Quality Precedence} of PI-based semantics: when $\sigma$ = conflict-free, $a \succ_{F}^{\phi}  c$ and  $b \succ_{F}^{\phi}  d$. If $\sigma$ = admissible, we have instead $c \succ_{F}^{\phi}  a$ and  $d \succ_{F}^{\phi}  b$.}
	\label{fig:ce5}
\end{figure}

\begin{figure}[htb]
	\centering
	\includegraphics[height=2.2cm]{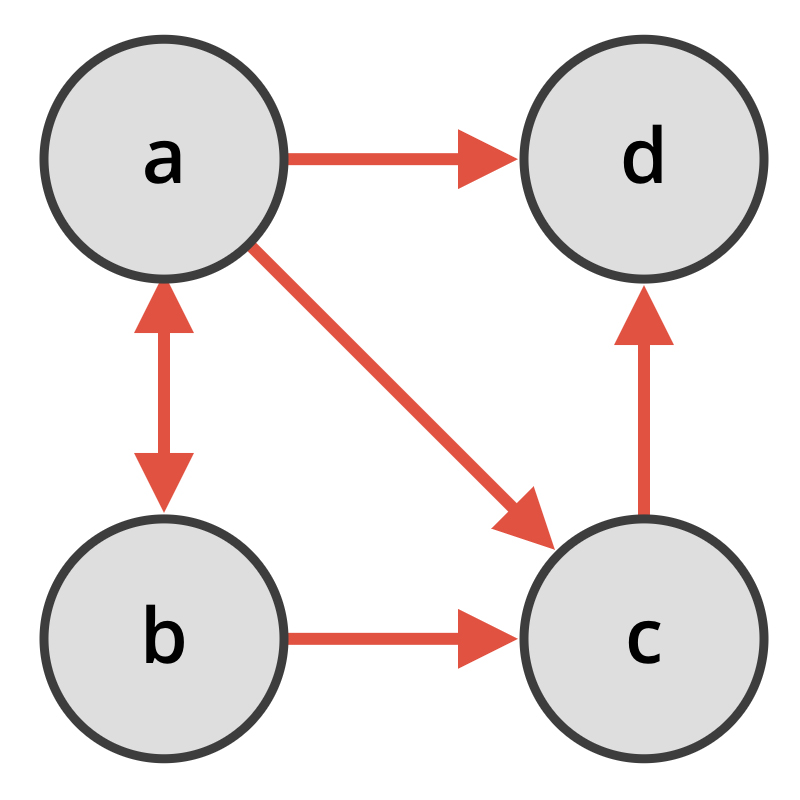}
	\caption{A counterexample for property \textit{Quality Precedence}. If $\sigma$ = complete, we have $b \succ_{F}^{\phi} c$ and  $a \succ_{F}^{\phi} d$.}
	\label{fig:ce6}
\end{figure}

\begin{figure}[htb]
	\centering
	\includegraphics[height=2.2cm]{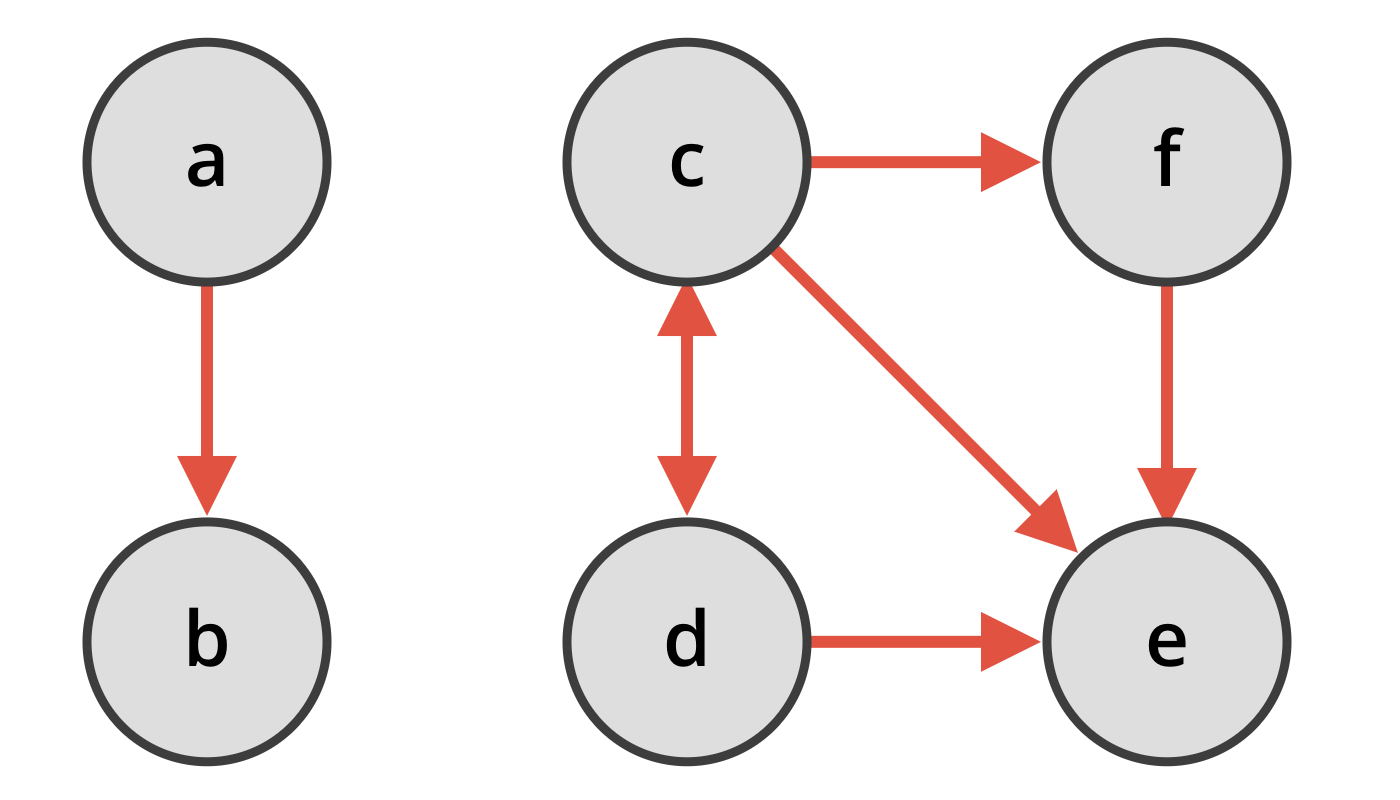}
	\caption{A counterexample for property \textit{Quality Precedence} of PI-based semantics. When $\sigma \in \{\text{preferred, stable}\}$, $a \succ_{F}^{\phi} c$ and  $b \simeq_{F}^{\phi} e$ holds.}
	\label{fig:ce7}
\end{figure}

\begin{figure}[htb]
	\centering
	\includegraphics[height=1.1cm]{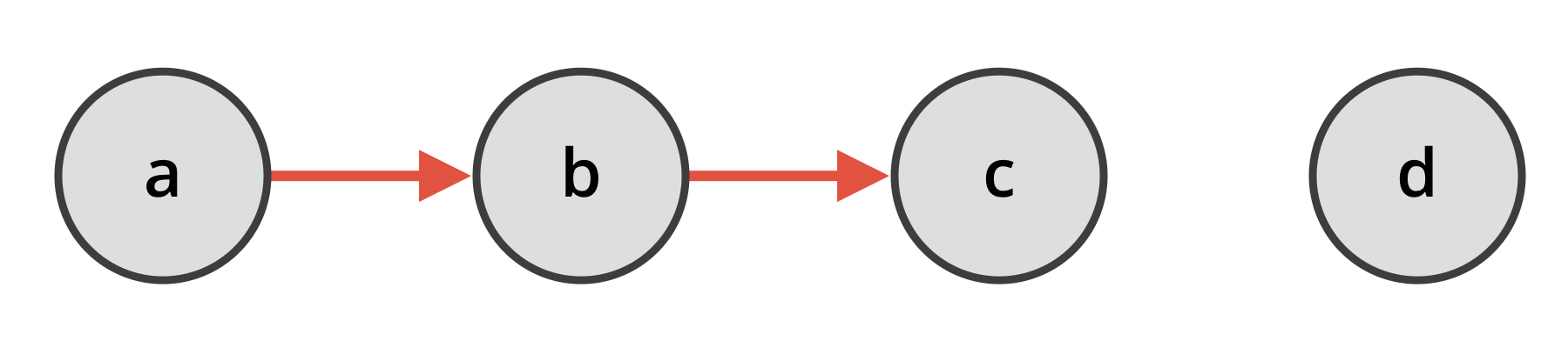}
	\caption{Counterexample for property \textit{Non-attacked Equivalence} of PI-based semantics. For $\sigma$ = conflict-free, $d \succ_{F}^{\phi} a$, and when $\sigma$ = admissible, $a \succ_{F}^{\phi} d$.}
	\label{fig:ce8}
\end{figure}

\textit{Abstraction}, \textit{Independence} and \textit{Totality} are desirable properties, since they guarantee that a total order can always be established over the arguments of an AF, only considering the structure of the underlying graph and the relations among the arguments.
The \textit{Self-contradiction} property ensures, for the conflict-free semantics, that self-attacking arguments have a lower ranking than the others. Indeed, the conflict-free semantics only takes into account whether there are attacks among the arguments. For the other semantics, it may happen that an argument attacked by another argument with a high value is ranked lower than a self-attacking argument: in other words, when the notion of defence is taken into account, an argument which is defeated by a solid counterargument has less value than a contradictory argument.
For the complete, preferred and stable semantics, which always label \textit{in} the arguments that do not receive attacks, the \textit{Non-attacked Equivalence} property allows for knowing the value of all the non-attacked arguments just by computing the value for one of them.
\textit{Cardinality Precedence} and \textit{Quality Precedence} never hold. In fact, the ranking of an argument $a$ does not only depend on either the number of attackers of $a$ or their position in the ranking, but also on how many other arguments are defended by $a$. We plan to study such property when extending our work to ranking-based semantics over weighted AFs.

Given a ranking, we can correlate the value given to each argument to its credulous/sceptical acceptance. Looking at the acceptability of an argument, we can have in advance some information about the value of its evaluation, without even computing the power index.

\begin{theorem}\label{th:skcr}
	Let $F = \langle A,R \rangle$ be an AF, $\pi \in \Pi:$\{$\phi,\beta,\rho,\gamma$\} a power index, and $v^{I}_{\sigma,F}$ a characteristic function. Then
	\begin{itemize}
		\item if $a$ is sceptically accepted $\implies$ $\pi_a(v^I_{\sigma, F}) > 0$;
		\item if $a$ is credulously rejected $\implies$ $\pi_a(v^I_{\sigma, F}) < 0$;
	\end{itemize}
\end{theorem}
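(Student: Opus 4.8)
The plan is to read each of the four indices as a weighted average of the marginal contributions $v_{S_a} = v^{I}_{\sigma,F}(S\cup\{a\}) - v^{I}_{\sigma,F}(S)$ over coalitions $S \subseteq A \setminus \{a\}$, and to show that the acceptance status of $a$ fixes the sign of \emph{every} such marginal. Once the marginals are uniformly signed, positivity (resp.\ negativity) of $\pi_a(v^{I}_{\sigma,F})$ follows from the fact that the weights defining the indices are non-negative, provided we can also exhibit one coalition that carries a strictly nonzero weight and a nonzero marginal.

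The first step is the two sign facts about $v^{I}_{\sigma,F}$. If $a$ is sceptically accepted, then $a \in in(L)$ for every $L \in L_\sigma$, so no set $S$ with $a \notin S$ can equal $in(L)$ for any $L$; hence $v^{I}_{\sigma,F}(S) = 0$ for all $S \subseteq A \setminus \{a\}$, and every marginal $v_{S_a} = v^{I}_{\sigma,F}(S\cup\{a\}) \in \{0,1\}$ is non-negative. Moreover it equals $1$ when $S = E \setminus \{a\}$ for any $\sigma$-extension $E$, which exists as long as $L_\sigma \neq \emptyset$. Dually, if $a$ is credulously rejected, i.e.\ $a \notin in(L)$ for every $L$, then no set containing $a$ is an extension, so $v^{I}_{\sigma,F}(S\cup\{a\}) = 0$ and hence every marginal $v_{S_a} = -v^{I}_{\sigma,F}(S) \in \{-1,0\}$ is non-positive, equalling $-1$ when $S = E$ is an extension.

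Next I would substitute these sign facts into the indices. Each of $\phi$, $\beta$, $\rho$, $\gamma$ can be written as $\pi_a(v^{I}_{\sigma,F}) = \sum_S w_\pi(S)\,v_{S_a}$ with non-negative weights $w_\pi(S) \ge 0$: for $\phi$ and $\beta$ the weights $|S|!\,(|A|-|S|-1)!/|A|!$ and $1/2^{|A|-1}$ are strictly positive on every $S$, while for $\rho$ and $\gamma$ the weights are of the form $1/(|M(v)|\,|S|)$ and $1/\varkappa(\cdot)$ on the coalitions entering the restricted sums and $0$ elsewhere. With all marginals non-negative this forces $\pi_a \ge 0$, and with all marginals non-positive it forces $\pi_a \le 0$; the strict inequalities then reduce to finding, for each index, one coalition with both positive weight and the extremal $\pm 1$ marginal. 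For $\phi$ and $\beta$ this is immediate, since every coalition has positive weight and the coalition $E \setminus \{a\}$ (for sceptical acceptance) resp.\ $E$ (for credulous rejection) supplies the needed marginal.

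The step I expect to be the main obstacle is strictness for $\rho$ and $\gamma$, whose sums range only over minimal winning coalitions, resp.\ winning coalitions with at least one critical voter, so one must verify that the extension $E$ realizing the extremal marginal is not filtered out by these restrictions. For sceptical acceptance I would observe that $a$ is itself critical in every extension $E \ni a$, because $E \setminus \{a\}$ cannot be an extension (that would contradict $a$ lying in all of them); hence $\varkappa(E) \ge 1$ and $E$ enters the Johnston sum with weight $1/\varkappa(E) > 0$ against a $+1$ marginal, and an analogous argument places $a$ in a minimal winning coalition for Deegan--Packel. The symmetric claim for credulous rejection — that the winning coalition carrying the $-1$ marginal still possesses a critical voter and therefore survives the filter — is more delicate and genuinely depends on the chosen $\sigma$, so I would carry this last verification out semantics by semantics.
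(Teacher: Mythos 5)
Your decomposition into non-negative weights times uniformly signed marginal contributions is sound, and it completely settles the Shapley and Banzhaf cases: the two sign facts you derive for $v^{I}_{\sigma,F}$ are correct, the weights in Equations~\ref{eq:sv} and~\ref{eq:ban} are strictly positive on every coalition, and the coalition $E\setminus\{a\}$ (resp.\ $E$) delivers the strict inequality whenever $L_\sigma\neq\emptyset$. This is already far more than the paper provides, since its entire proof of Theorem~\ref{th:skcr} is the sentence that the claim ``is straightforward and can be derived from the definition of the power indexes''; your caveat about $L_\sigma=\emptyset$ (e.g.\ stable semantics with no stable labelling, where every argument is vacuously sceptically accepted yet $v^{I}_{\sigma,F}\equiv 0$ and every index vanishes) is a genuine boundary case the paper silently ignores.

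The gap is exactly where you locate it, but the deferred verification cannot succeed: for the Deegan--Packel index the credulous-rejection half of the statement is false. The sum in Equation~\ref{eq:dp} ranges only over coalitions derived from the minimal winning coalitions in $M_a(v)$, i.e.\ those containing $a$; if $a$ belongs to no extension, then no winning coalition contains $a$, so $M_a(v)=\emptyset$, the sum is empty, and $\rho_a(v^{I}_{\sigma,F})=0$ rather than $<0$. The paper's own Table~\ref{tab:dp_ex} exhibits this: argument $b$ of the running example is rejected in every complete and every preferred extension, yet receives $0.00000$ under $v^{I}_{COM}$ and $v^{I}_{PRE}$ for $\rho$. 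A similar degeneracy can hit Johnston in corner cases, since its sum is filtered to winning coalitions with $\varkappa(S)\geq 1$: if the only extensions avoiding $a$ have no critical members (a lone self-attacking argument under conflict-free semantics, whose sole extension is $\emptyset$), the filtered sum is empty and $\gamma_a=0$. So the semantics-by-semantics check you postpone would fail; as stated, the theorem holds for $\phi$ and $\beta$ (modulo existence of an extension) but for $\rho$, and for $\gamma$ in degenerate cases, the strict inequalities must be weakened to $\geq 0$ and $\leq 0$.
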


\begin{proof}
	The proof is straightforward and can be derived from the definition of the power indexes.
\end{proof}

Sceptically accepted arguments are ranked higher than credulously accepted and rejected arguments. Analogously, credulously accepted arguments are ranked higher that rejected arguments.

\begin{definition}\label{def:new_prop}
	Let $F = \langle A,R \rangle$ be an AF, $a,b \in F$ two arguments, $\sigma$ a ranking-based semantics and $\delta$ a Dung's semantics. We define the following properties.
	
	\textbf{Sceptical Precedence ($\delta$-ScP).}  If $a$ is sceptically accepted with respect to $\delta$ and $b$ is not, than $a \succ^\sigma b$.
	
	\textbf{Credulous Precedence ($\delta$-CrP).}  If $a$ is credulously accepted with respect to $\delta$ and $b$ is always rejected, than $a \succ^\sigma b$.
\end{definition}

The following proposition holds.

\begin{proposition}
	The PI-based semantics satisfies $\delta$-ScP and $\delta$-CrP for any $\delta \in \{$conflict-free, admissible, complete, preferred, stable$\}$.
\end{proposition}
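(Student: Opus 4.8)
The plan is to reduce both properties to a single strict inequality in the first coordinate of the lexicographic order defining the PI-based semantics: once I establish $\pi_a(v^I_{\sigma,F}) > \pi_b(v^I_{\sigma,F})$, the definition of $\succ_{F}^{\pi}$ yields $a \succ_{F}^{\pi} b$ immediately and the tie-breaking function $v^O_{\sigma,F}$ never has to be inspected. To this end I would first translate the acceptance status of an argument into a combinatorial statement about the family $W = \mathit{in}(L_\sigma)$ of in-labellings, which act as the winning coalitions of $v^I_{\sigma,F}$. Writing $W_x = \{E \in W \mid x \in E\}$ and $W_{\bar x} = W \setminus W_x$, one has: $x$ is sceptically accepted iff $W_{\bar x} = \emptyset$ (so $x$ lies in every extension, a veto player); $x$ is credulously accepted iff $W_x \neq \emptyset$; and $x$ is always rejected iff $W_x = \emptyset$. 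This reformulation is useful because every marginal contribution is governed by these sets: $v_{S_x} = 1$ exactly when $S \cup \{x\} \in W$ and $S \notin W$, while $v_{S_x} = -1$ exactly when $S \in W$ (which forces $x \notin S$), and $v_{S_x} = 0$ otherwise.

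The core step is a direct comparison of $\pi_a$ and $\pi_b$ for the Shapley value and the Banzhaf index, both of which are sums of the $v_{S_x}$ against strictly positive weights. Grouping the summands by the resulting coalition $T = S \cup \{x\}$ gives the closed forms $\beta_x \, 2^{|A|-1} = |W_x| - |W_{\bar x}|$ and $\phi_x = \sum_{T \in W_x} w(|T|-1) - \sum_{S \in W_{\bar x}} w(|S|)$ with $w(s) = s!\,(|A|-s-1)!/|A|! > 0$. For $\delta$-ScP I would use $W_{\bar a} = \emptyset$ (so $W_a = W = W_b \sqcup W_{\bar b}$) together with $W_{\bar b} \neq \emptyset$; the negative contributions then cancel and one is left with $\phi_a - \phi_b = \sum_{T \in W_{\bar b}}\big(w(|T|-1)+w(|T|)\big) > 0$ and $(\beta_a-\beta_b)\,2^{|A|-1} = 2|W_{\bar b}| > 0$. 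For $\delta$-CrP I would instead use $W_b = \emptyset$ (so $W_{\bar b} = W$) together with $W_a \neq \emptyset$, obtaining $\phi_a - \phi_b = \sum_{T \in W_a}\big(w(|T|-1)+w(|T|)\big) > 0$ and $(\beta_a-\beta_b)\,2^{|A|-1}=2|W_a|>0$. In both computations the Shapley weights stay within range: every coalition carrying a weight $w(|T|-1)$ contains $a$ and is therefore non-empty, and every coalition carrying a weight $w(|T|)$ omits $b$ and is therefore a proper subset of $A$, so the degenerate arguments $w(-1)$ and $w(|A|)$ never arise.

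Having obtained the strict inequality $\pi_a(v^I_{\sigma,F}) > \pi_b(v^I_{\sigma,F})$ in each case, I would close by invoking the first clause of the PI-based order to conclude $a \succ_{F}^{\pi} b$, which is precisely what $\delta$-ScP and $\delta$-CrP assert. The argument is uniform over $\delta \in \{$conflict-free, admissible, complete, preferred, stable$\}$ because it only uses the abstract fact that $W = \mathit{in}(L_\delta)$, never the internal structure of $\delta$. I would also remark that whenever the two indexes have opposite signs the conclusion follows more cheaply from Theorem~\ref{th:skcr}: if $b$ is always rejected then $\pi_b < 0$, and if $a$ is sceptically accepted then $\pi_a > 0$; so the only content not already delivered by sign considerations is the comparison of a sceptical (or credulous) argument against a non-rejected one of undetermined sign, which is exactly where the telescoping identities above do the work.

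The main obstacle I anticipate is extending this comparison to the Deegan--Packel index $\rho$ and the Johnston index $\gamma$, whose definitions rely on minimal winning coalitions and on critical voters. Unlike Shapley and Banzhaf, these indexes are not weighted by coalition size alone: the normalisers $|M(v)|$ and $\varkappa(S)$ depend on the coalition itself, so the negative terms of $\rho$ and $\gamma$ no longer cancel and the clean residual sums disappear. The sign-separated subcase of $\delta$-ScP (where $b$ is always rejected) still follows verbatim from Theorem~\ref{th:skcr}, but the remaining comparisons — a veto argument against a merely credulous one in $\delta$-ScP, and a credulous argument against a rejected one in $\delta$-CrP, where both values may share the same sign — need a dedicated argument. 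The natural route is to exploit that a veto player is a critical voter in every extension it belongs to (removing it destroys membership in $W$), so that it attains the maximal per-coalition reward in both $\rho$ and $\gamma$, and then to balance this against the bound that $W_{\bar b}\neq\emptyset$ or $W_a\neq\emptyset$ imposes on the opponent; making the estimate quantitative against the coalition-dependent denominators is the delicate part of the proof.
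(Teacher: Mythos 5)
Your treatment of the Shapley value and the Banzhaf index is sound and in fact considerably more careful than the paper's own proof, which disposes of both properties in two sentences: it observes that a credulously (resp.\ sceptically) accepted argument has at least one strictly positive marginal contribution while an always-rejected (resp.\ non-sceptically-accepted) one does not, and then simply asserts that the index of the former exceeds that of the latter without carrying out the comparison. Your closed forms $\phi_x=\sum_{T\in W_x}w(|T|-1)-\sum_{S\in W_{\bar x}}w(|S|)$ and $\beta_x\,2^{|A|-1}=|W_x|-|W_{\bar x}|$ are correct (they follow from splitting the sum by linearity), and the telescoping differences you derive from them do establish the strict inequality $\pi_a(v^I_{\sigma,F})>\pi_b(v^I_{\sigma,F})$ for $\pi\in\{\phi,\beta\}$, which is exactly the step the paper's proof waves at. One inaccuracy worth fixing: $v_{S_x}=-1$ requires $S\in W$ \emph{and} $S\cup\{x\}\notin W$; when both $S$ and $S\cup\{x\}$ are extensions (which happens for the conflict-free, admissible and complete semantics, e.g.\ $\emptyset$ and a singleton admissible set) the contribution is $0$, not $-1$. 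This does not damage the closed forms, but the case analysis as you state it is false.

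The genuine gap is the one you flag yourself: the proposition quantifies over all four indexes, and you prove nothing for $\rho$ and $\gamma$. For the Deegan--Packel index the difficulty is not merely ``delicate'', it is fatal: under Equation~\ref{eq:dp} every argument that belongs to no minimal winning coalition receives value $0$, whether it is credulously accepted or always rejected. The paper's own Table~\ref{tab:dp_ex} exhibits the failure on the AF of Figure~\ref{fig:exAF} with $\sigma$ = complete: $b$ (in no complete extension) and $d$ (credulously accepted, since $\{a,c,d\}$ is complete) both receive $v^I$-value $0$ and $v^O$-value $0$, hence $b\simeq d$ in the $\rho$-based ranking and $\delta$-CrP is violated. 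No estimate against the coalition-dependent denominators will close this case; the statement is simply false for $\rho$ unless it is read as restricted to $\phi$ and $\beta$, a restriction the paper's proof makes only implicitly by arguing from marginal contributions summed with positive weights over \emph{all} coalitions. The Johnston case remains open in both your proposal and the paper.
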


\begin{proof}
	Given an AF $F$, a credulously accepted argument $i$ with respect to $\sigma$ and an evaluation function $v^I_{\sigma,F}$, there exists at least one subset of arguments $S$ such that $v^I_{\sigma,F}(S_{-i} \cup \{i\}) - v^I_{\sigma,F}(S_{-i}) > 0$. Thus the value of $i$ will always be higher than that of any rejected argument $j$, for which $v^I_{\sigma,F}(S_{-j} \cup \{j\}) - v^I_{\sigma,F}(S_{-j}) < 0$ for any $S$, and $\sigma$-CrP holds. We can make the same consideration for $\sigma$-SkP, showing that skeptically accepted arguments have higher value than the others.
\end{proof}

As a first step for comparing our semantics with other from the literature, we have checked if property $\delta$-CrP is satisfied by some of the ranking-based semantics surveyed in~\cite{bonzon_comparative_2016} that return a total ranking, namely \textit{Cat}, \textit{Dbs}, \textit{Bds}.

\begin{proposition}
	The ranking-based semantics \textit{Cat}, \textit{Dbs} and \textit{Bds} do not satisfy the $\delta$-CrP property when $\delta \in$ \{admissible, complete, preferred, stable\}.
\end{proposition}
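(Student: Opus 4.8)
The plan is to refute the property by a single counterexample framework that works uniformly for all three semantics \emph{Cat}, \emph{Dbs}, \emph{Bds} and for every $\delta \in \{$admissible, complete, preferred, stable$\}$. Since these semantics rank arguments purely from the topology of the attack graph, independently of the chosen $\delta$, I only need one $F$ in which some $a$ is credulously accepted (for all four $\delta$) while some always-rejected $b$ is nonetheless ranked strictly above $a$ by each semantics.

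First I would build $F = \langle A, R\rangle$ with $A = \{a,b,c,d,e\}$ and $R = \{(d,a),(e,a),(a,d),(a,e),(c,b)\}$: here $a$ has two attackers $d,e$ that it attacks back, while $b$ is attacked by the unattacked argument $c$. The acceptance check is immediate: $\{a,c\}$ is a stable extension (each of $d,e,b$ is attacked by an \emph{in} argument and nothing is left \emph{undec}), hence it is also preferred, complete and admissible, so $a$ is credulously accepted under every $\delta$. Conversely $c$ is unattacked and therefore can never be labelled \emph{out}, so $b$ can never be labelled \emph{in} in any admissible/complete/preferred/stable labelling; thus $b$ is always rejected (never accepted) under every $\delta$. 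I read ``always rejected'' as ``never credulously accepted'', the interpretation compatible with Definition~\ref{def:new_prop} and with the existence of the empty admissible labelling.

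Next I would compute the three rankings on $F$ and exhibit $b \succ^\sigma a$ in each. For the categoriser, $Cat(c)=1$ gives $Cat(b)=\tfrac12$, whereas the symmetric structure $a \leftrightarrow d$, $a \leftrightarrow e$ forces $Cat(a)$ to satisfy $z = 1/(1+2/(1+z))$, i.e. $z^2+2z-1=0$, so $Cat(a)=\sqrt2-1<\tfrac12$; since a larger categoriser value means a higher rank, $b \succ^{Cat} a$. For the burden-based semantics the first burden numbers already decide the comparison: $Bur^1(a)=1+1+1=3$ while $Bur^1(b)=1+1=2$, and a smaller burden is more acceptable, so $b \succ^{Bds} a$. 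For the discussion-based semantics the first component is the number of direct attackers, $|R^-(a)|=2>1=|R^-(b)|$, which again ranks $b$ strictly above $a$. In all three cases the property $\delta$-CrP, which would demand $a \succ^\sigma b$, is violated.

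The hard part will be getting the normalisation and comparison direction of each semantics exactly right, since \emph{Dbs} and \emph{Bds} compare integer/real sequences lexicographically with sign conventions that are easy to invert, and one must confirm the comparison is already settled at the first coordinate so that deeper levels are irrelevant. A secondary point to pin down is the existence and uniqueness of the categoriser fixed point in the presence of the two-length cycles around $a$, which is guaranteed for finite AFs and here is solved explicitly. Finally I would observe that, because the three rankings depend only on $R$, the very same computation discharges all four values of $\delta$ simultaneously.
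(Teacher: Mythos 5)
Your proposal is correct and follows essentially the same strategy as the paper, which likewise refutes $\delta$-CrP by exhibiting a single counterexample AF (Figure~\ref{fig:ex-cat}) in which an always-rejected argument is ranked above a credulously accepted one by \emph{Cat}, \emph{Dbs} and \emph{Bds}, relying on the same observation that these rankings depend only on the attack graph and hence one example covers all four choices of $\delta$. Your version is in fact more explicit than the paper's (which merely asserts the outcome of its figure): the fixed-point computation $Cat(a)=\sqrt{2}-1<\tfrac12=Cat(b)$ and the first-coordinate comparisons for \emph{Bds} and \emph{Dbs} all check out, as do the acceptance claims for $a$ and $b$.
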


Indeed, considering the example in Figure~\ref{fig:ex-cat}, we have that argument $d$ is always preferred to $e$ in the rankings obtained by using Cat, Dbs and Bds.

\begin{figure}[htb]
	\centering
	\includegraphics[height=2.2cm]{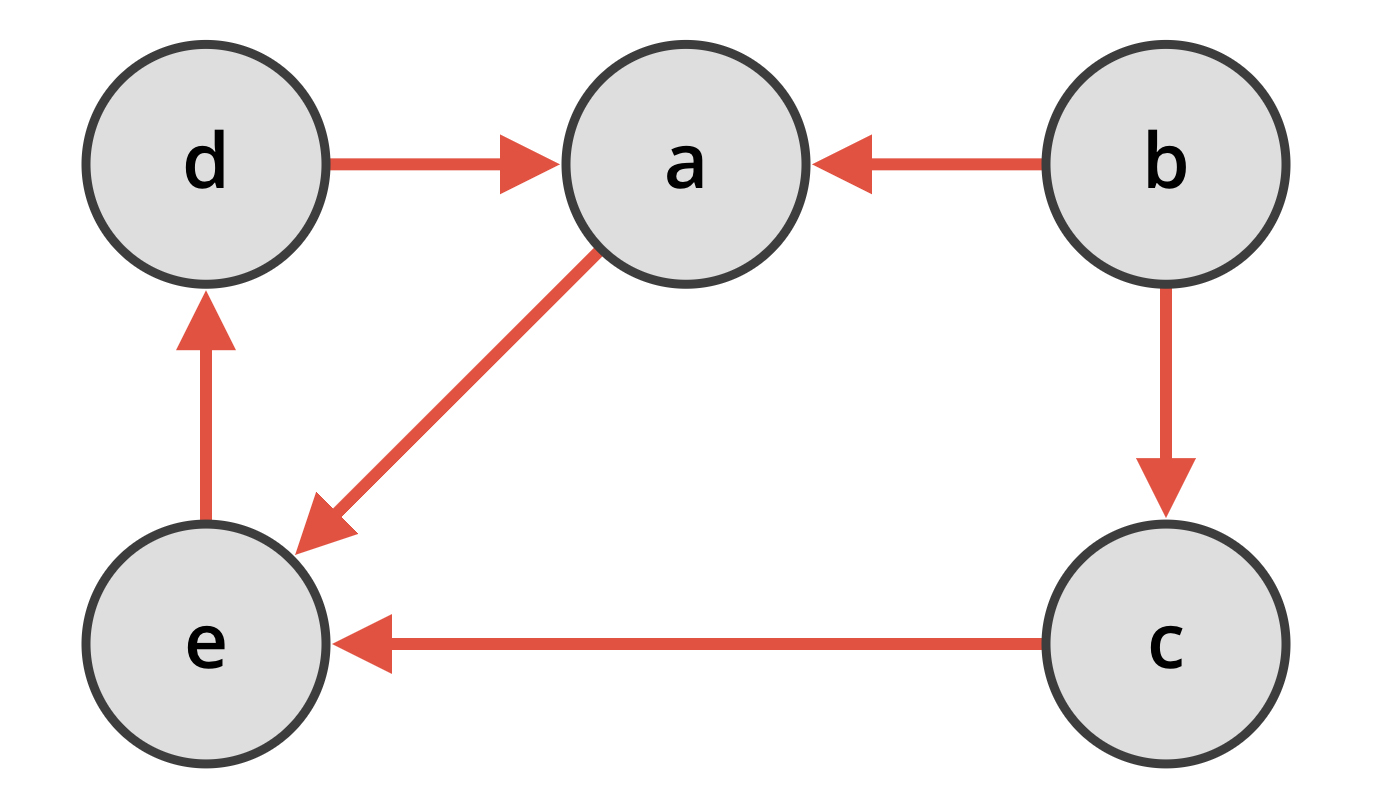}
	\caption{Example of an AF where $d \succ^\sigma e$ for $\sigma \in$ \{Cat, Dbs, Bds\}.}
	\label{fig:ex-cat}
\end{figure}

The graded semantics proposed in~\cite{grossi_graded_2015} also takes into account extensions of classical semantics in order to determine an ordering between arguments of an AF. The two principles on which the semantics is based are: having fewer attackers is better than having more; having more defenders is better than having fewer. Since the authors assume the sceptical definition for the justification of the arguments, the graded semantics satisfies both $\delta$-ScP and $\delta$-CrP. However, the used order relation is only partial (and thus some of the arguments may be incomparable). Moreover, the ranking being built on the two principles mentioned above does not allow to catch the real contribution of the arguments in forming the extensions, that, instead, is the intention of the PI-based semantics.

Finally, we discuss the ranking semantics, based on subgraphs analysis, introduced in~\cite{dondio_ranking_2018}. This semantics sorts the arguments of an AF by establishing a lexicographical ordering between the values of a tuple that contains, for each argument $a$, the label assigned to $a$ by a certain semantics, and the number of times $a$ is labelled $l$ over the total number of subgraphs, for $l=$ \textit{in}, \textit{out} and \textit{undec}, respectively. The semantics satisfies $\delta$-ScP and $\delta$-CrP. The main difference with our approach is that, while we only consider acceptable extensions for obtaining the evaluation of an argument, the semantics in~\cite{dondio_ranking_2018} uses all the possible subsets of arguments for computing the ranking, leading to results that do not fit the definition of the chosen Dung semantics.
For instance, the ranking returned by the PI-based semantics for the AF in Figure~\ref{fig:ex1}, with respect to the preferred semantics, is $a \simeq c \succ b$, since the only preferred extension is $\{a,c\}$.
Similar considerations also hold for the example in Figure~\ref{fig:ex2} for the preferred semantics, where the PI-based semantics returns $a \simeq b \simeq c$ for the functions $\phi$ and $\beta$, and $b \succ a \simeq b$ for $\gamma$. Those rankings reflects the meaning of preferred semantics: arguments $a$ and $c$ are both necessary for obtaining a preferred extension, so they are ranked the same. According to the Johnston Index $\gamma$, argument $b$ is the best one since it is the only critical voter of its coalition (i.e., the extension $\{b\}$). On the other hand, for the admissible semantics, $\phi$ and $\beta$ return the ranking $c \succ a \simeq b$. In fact, $c$, that is admissible alone, also reinstate $a$. The tie between $a$ and $b$ is broken when considering the function $\gamma$, for which $c \succ a \succ b$.

\begin{figure}[htp]
	\centering
	\subfloat[Reinstatement and rebuttal attack. Admissible and preferred semantics are $\mathit{ADM}$ = $\{ \emptyset,$ $\{a\},$ $\{c\},$ $\{a, c\} \}$ and $\mathit{PRE} = \{ \{a, c\} \}$, respectively.]{%
		\includegraphics[width=\linewidth]{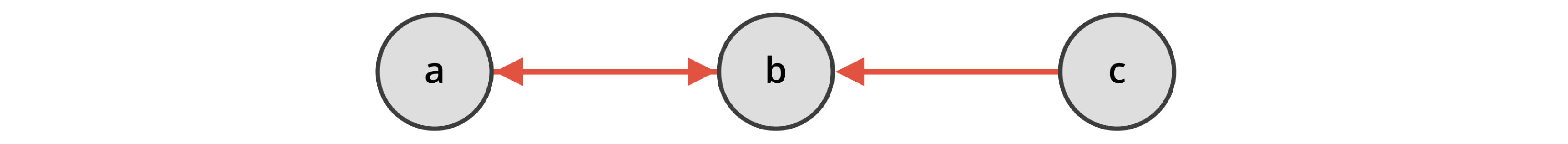}%
		\label{fig:ex1}%
	}
	
	\subfloat[Reinstatement with rebuttal attack. Admissible and preferred semantics are $\mathit{ADM}$ = $\{ \emptyset,$ $\{b\},$ $\{c\},$ $\{a, c\} \}$ and $\mathit{PRE} = \{ \{b\}, \{a, c\} \}$, respectively.]{%
		\includegraphics[width=\linewidth]{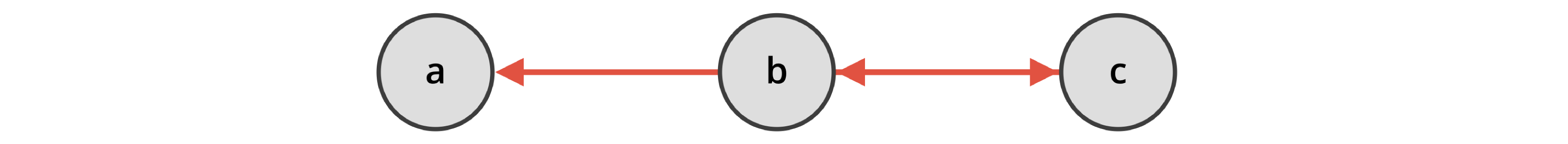}%
		\label{fig:ex2}%
	}
	
	\caption{Example of two AFs with different rebuttal/restatement configurations.}
	
\end{figure}

In the following section, we show how the PI-based ranking semantics have been implemented in ConArg.

\section{Tool Description}\label{sec:tool}

The ConArg Web Interface (see Figure~\ref{fig:interface} for an overview) allows one to easily perform complex argumentation related tasks. Below, we describe the main features of the tool, highlighting those introduced more recently. 

\begin{figure*}[t]
	\centering
	\includegraphics[width=0.8\linewidth]{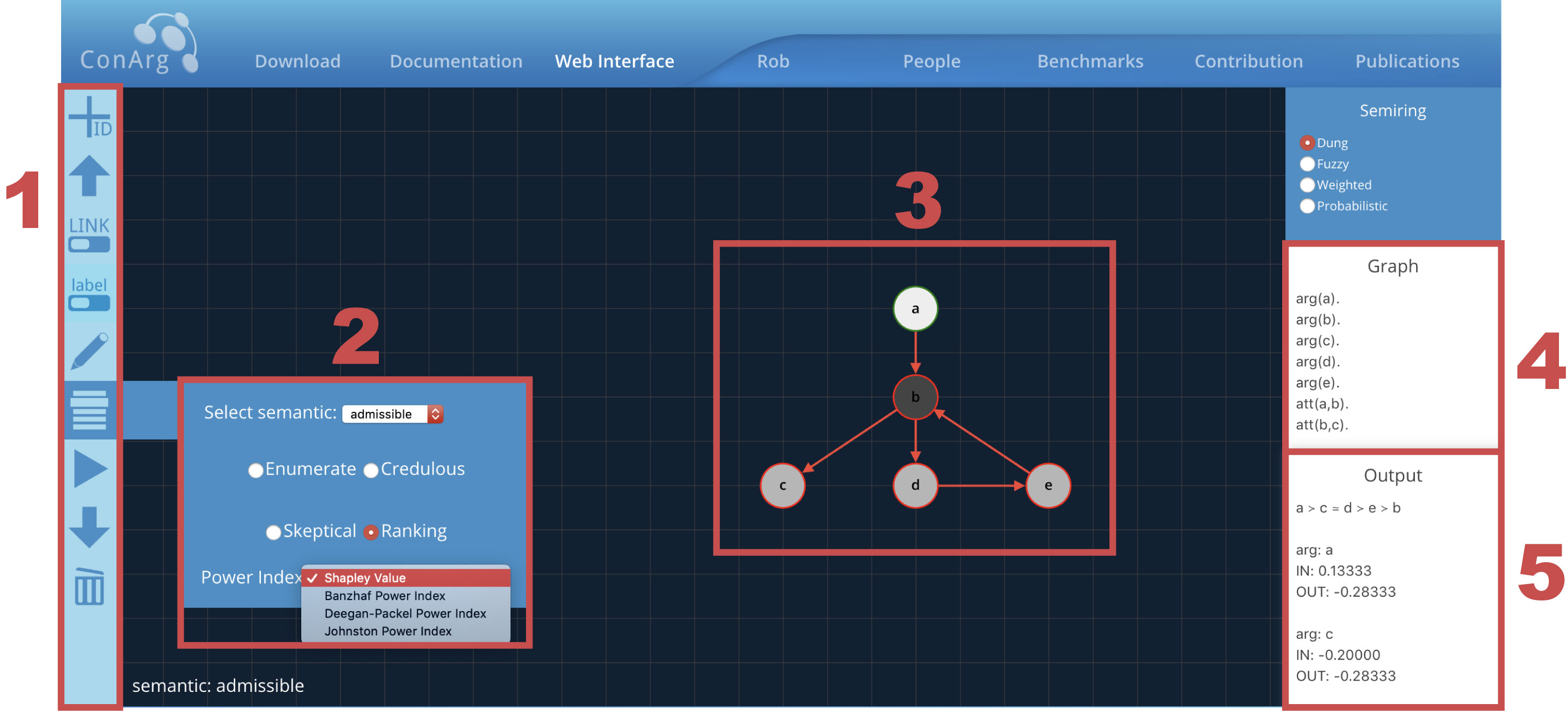}
	\caption{A screenshot of the ConArg web interface. The highlighted elements are: 1. Options menu, 2. Semantics selection panel, 3. Canvas where the AF is visualised, 4. AF in input, 5. Output panel.}
	\label{fig:interface}
\end{figure*}

\textbf{Menu.} Positioned to the left side of the interface, it allows for choosing among different options for both visualising AFs and solving argumentation problems.

\textbf{Semantics selection panel.} Here it is possible to set the parameters for the resolution of several problems. First of all, one is required to select a Dung semantics through the dedicated drop-down menu. For each semantics, four different kinds of problem can be solved.

\textbf{Canvas.} This area of the interface has a twofold purpose. On one hand, it is possible to define an AF by drawing nodes and edges. On the other hand, after the calculation of a solution for a certain problem, the canvas allows for visualising the output directly on the displayed AF, through a specific colouration of the arguments.


\textbf{AF in input.} AFs can be entered in this panel. Changes to the canvas also affect this area, that maintains a coherent representation of the AF.

\textbf{Output panel.} The solutions for the various problems solved by ConArg are displayed here. It is also possible to download a text file containing the output.

The tool relies on JavaScript and PHP classes to implement the various components.
The graph drawing functions are provided by \textit{D3.js}, a JavaScript library for visualising and manipulating data (see~\cite{bistarelli_tool_2019} for details).

\subsection{Implementation of the PI-Based Semantics}
Behind the web interface, ConArg has several modules (like the solver and the ranking script) that allow one to access different functionalities to cope with argumentation problems~\cite{bistarelli_conarg_2011,bistarelli_conarg_2016}. A library containing the ConArg source code is also available online\footnote{http://www.dmi.unipg.it/conarg/download.html}. In this section, we discuss, in particular, the component of the tool that concerns ranking-based semantics, putting attention on implementation aspects.

When we start the computation of the ranking over the arguments of an AF $F$, the interface calls the ConArg solver that returns the set $S$ of extensions for the chosen Dung semantics $\sigma$. These extensions represent the sets of \textit{in} arguments with respect to $\sigma$ and are formatted as sets of strings (e.g., $S = \{\{a\},\{a,b\},\{c,d\}\}$, where $a,b,c$ and $d$ are arguments). Together with the set of extensions, also the framework $F$ and the power index $\pi$ that we want to use are passed to the ranking script. The script, then, computes the specified power index $\pi$ for each of the argument in $F$. The obtained values are approximated to the nearest fifth decimal digit.
The four functions that implement the equations of Section~\ref{sec:voting} share a common part, namely $v_{S_i}$, that represents the evaluation of the contribution of the argument $i$ in forming acceptable extensions. 
For the sake of efficiency, we compute $\pi$ only with respect to those sets $S$ such that either $S$ or $S\cup \{i\}$ is an extension for $\sigma$. In any other case, the value of $v(S \cup \{i\}) - v(S)$ is zero, so we don't need to do the calculation.

We distinguish between two different characteristic functions: $v^{I}_{\sigma,F}(S)$ and $v^{O}_{\sigma,F}(S)$. As stated in Definition~\ref{def:charfun}, the former function takes into account the set of \textit{in} arguments. Given a set of arguments $S$ that does not contain $i$, if $S \cup \{i\}$ is an extension with respect to $\sigma$ and $S$ alone is not, then $i$ brings a positive contribution to the coalition, and its own rank will be higher according to $v^{I}_{\sigma,F}(S)$. On the other hand, the latter function ($v^{O}_{\sigma,F}(S)$) only considers arguments that are labelled \textit{out} by $\sigma$. In detail, $i$ gets a positive value by $\pi$ when $S \cup \{i\}$ is a set of \textit{out} arguments and $S$ alone is not. The set $out(L_\sigma)$ is obtained by computing the sets of arguments that are attacked by the extensions of the semantics $\sigma$.
At this point, each argument of $F$ is associated with the values of the two functions; the resulting structure has the format of an array $[\texttt{arg\_name},\texttt{pi\_in},\texttt{pi\_out}]$, where the three components are: the identifier of the argument, the value of the power index $\pi$ obtained through $v^{I}_{\sigma,F}(S)$, and the value of $\pi$ obtained through $v^{O}_{\sigma,F}(S)$, respectively.

In order to establish the preference relation between two arguments, the PI-based semantics considers the value $\texttt{pi\_in}$ first: the greater the score of an argument with respect to $v^{I}_{\sigma,F}(S)$, the higher its position in the ranking. In case of a tie, i.e., when the value of $\texttt{pi\_in}$ is the same for both the arguments that we want to compare, we perform a further control looking at the value of $v^{O}_{\sigma,F}(S)$. Following the principle that accepted arguments are better than rejected ones, the greater the value of an argument with respect to  $\texttt{pi\_out}$, the lower its position in the ranking.
Consider, for example, two arguments $a$ and $b$, belonging to $F$, with the following evaluations obtained through $\pi$: $[\texttt{a},\texttt{0.2},\texttt{-0.5}]$ and $[\texttt{b},\texttt{0.2},\texttt{-0.4}]$. The value $\texttt{pi\_in}$ is equal for both $a$ and $b$, therefore we proceed to confront the values for $\texttt{pi\_out}$. Since $-0.5 < -0.4$, we have that $a \succ^\pi_F b$. The motivation to this kind of ranking is that, while $a$ and $b$ have the same contribution in forming acceptable extensions, $a$ belongs to fewer sets of \textit{out} arguments, that is, $a$ is defeated less times than $b$. Hence, it is reasonable to prefer $a$ to $b$.

Finally, when all the arguments are sorted according to the semantics and the power index that we have selected, the results of the computation are displayed in the output panel (frame 5 of Figure~\ref{fig:interface}). Along with the overall ranking, we show the \texttt{pi\_in} and \texttt{pi\_out} values of each argument. For providing a visual hint about which arguments are the most preferred and which ones the least, we assign a colour to each node of the AF visualised in the canvas. The assigned colours vary in a greyscale, according to the position of the corresponding argument in the obtained ranking: the lighter the colour, the higher the rank (as depicted in the frame 3 of Figure~\ref{fig:interface}).


\section{An Example with ConArg}\label{sec:example}
In this section, we provide an example of how the ConArg web interface can be used for dealing with ranking semantics. We show the procedure for obtaining a ranking among the arguments of a given AF through one of the implemented power indexes. We also compare the results for all the different power indexes, highlighting the differences in terms of final ordering of the arguments. For our example, we consider the AF in Figure~\ref{fig:exAF}, that has an initiator (i.e., the argument $a$, which is not attacked by any other argument), a symmetric attack (between $c$ and $d$), a self-attack (in $e$), and a cycle involving $b,d$ and $c$.

\begin{figure}[htb]
	\centering
	\includegraphics[width=4.5cm]{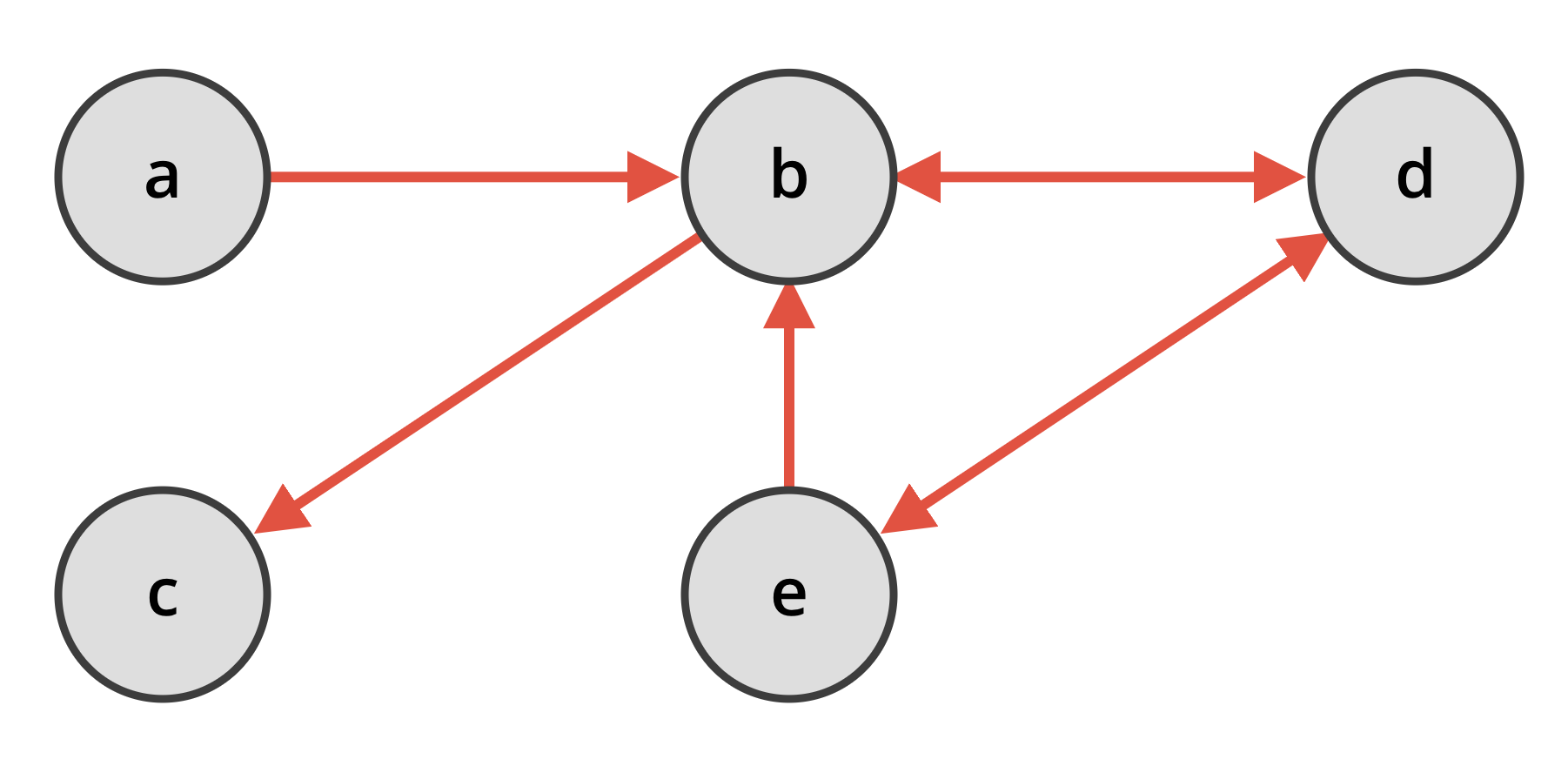}
	\caption{Example of an AF. The sets of extensions for the conflict-free, admissible, complete, preferred and stable semantics are: $\mathit{CF} = \{ \emptyset, \{a\},$ $\{b\},$ $\{c\},$ $\{d\},$ $\{e\},$ $\{a,c\},$ $\{a,d\},$ $\{a,e\},$ $\{c,d\},$ $\{c,e\},$ $\{a,c,d\},$ $\{a,c,e\} \}$, $\mathit{ADM} = \{ \emptyset,$ $\{a\},$ $\{d\},$ $\{e\},$ $\{a,c\},$ $\{a,d\},$ $\{a,e\},$ $\{c,d\},$ $\{c,e\},$ $\{a,c,d\},$ $\{a,c,e\} \}$, $\mathit{COM} = \{ \{a, c\},$ $\{a, c, d\},$ $\{a, c, e\}\}$, and $\mathit{PRE} = \mathit{STB} = \{ \{a, c, d\},$ $\{a, c, e\} \}$, respectively.}
	\label{fig:exAF}
\end{figure}

Given an AF in input, there are two prerequisites for the calculation of the ranking over the arguments. First of all, since the PI-based semantics is parametric to a Dung semantics, this latter must be selected in the semantics panel (frame 2 of Figure~\ref{fig:interface}). Then, we need to choose a power index among the four implemented.
At this point, we can run the computation from the start button.
Below, we report the output provided by ConArg for $\phi$, $\beta$, $\rho$ and $\gamma$ (that correspond to the functions for computing the Shapley Value and the Banzhaf, Deegan-Packel and Johnston Index, respectively), and the semantics conflict-free, admissible, complete and preferred. We omit the stable one since, in this example, it returns the same set of extensions as the preferred.
For each semantics,  the values of the power index obtained with respect to the sets of \textit{in} and \textit{out} arguments are alternated in each row.
Tables~\ref{tab:sv_ex},~\ref{tab:ba_ex},~\ref{tab:dp_ex},~\ref{tab:jo_ex} show the results for the aforementioned indexes.

\begin{table*}[htb]
	\caption{Ranking for the arguments of the AF in Figure~\ref{fig:exAF} obtained through the Shapley Value.}
	\fontsize{5}{9} \selectfont 
	\centering
	\begin{tabular}{c|c|c|c|c|c|c|c}
		\multicolumn{1}{c|}{\textbf{}} & \multicolumn{1}{c|}{\textbf{a}} & \multicolumn{1}{c|}{\textbf{b}} & \multicolumn{1}{c|}{\textbf{c}} & \multicolumn{1}{c|}{\textbf{d}} & \multicolumn{1}{c|}{\textbf{e}} & \multicolumn{1}{c|}{Semantics} & \multicolumn{1}{c}{Ranking} \\ \hline\hline
		$v^{I}_{CF}$ & $-0.05000$ & $-0.46667$ & $-0.05000$ & $-0.21667$ & $-0.21667$ & \multirow{2}{*}{$\phi-CF$} & \multicolumn{1}{c}{\multirow{2}{*}{$a \succ c \succ e \succ d \succ b$}} \\ \cline{1-6}
		$v^{O}_{CF}$ & $-0.35000$ & $0.06667$ & $-0.26667$ & $-0.18333$ & $-0.26667$ &  & \multicolumn{1}{c}{} \\ \hline\hline
		$v^{I}_{ADM}$ & $0.05000$ & $-0.61667$ & $-0.20000$ & $-0.11667$ & $-0.11667$ & \multirow{2}{*}{$\phi-ADM$} & \multicolumn{1}{l}{\multirow{2}{*}{$a \succ d \simeq e \succ c \succ b$}} \\ \cline{1-6}
		$v^{O}_{ADM}$ & $-0.31667$ & $0.10000$ & $-0.31667$ & $-0.23333$ & $-0.23333$ &  & \multicolumn{1}{l}{} \\ \hline\hline
		$v^{I}_{COM}$ & $0.11667$ & $-0.13333$ & $0.11667$ & $-0.05000$ & $-0.05000$ & \multirow{2}{*}{$\phi-COM$} & \multicolumn{1}{l}{\multirow{2}{*}{ $a \simeq c \succ d \simeq e \succ b$}} \\ \cline{1-6}
		$v^{O}_{COM}$ & $-0.11667$ & $0.30000$ & $-0.11667$ & $-0.03333$ & $-0.03333$ &  & \multicolumn{1}{l}{} \\ \hline\hline
		$v^{I}_{PRE}$ & $0.06667$ & $-0.10000$ & $0.06667$ & $-0.01667$ & $-0.01667$ & \multirow{2}{*}{$\phi-PRE$} & \multicolumn{1}{l}{\multirow{2}{*}{$a \simeq c \succ d \simeq e \succ b$}} \\ \cline{1-6}
		$v^{O}_{PRE}$ & $-0.06667$ & $0.10000$ & $-0.06667$ & $0.01667$ & $0.01667$ &  & \multicolumn{1}{l}{} \\ \hline
	\end{tabular}\vspace{1em}
	\label{tab:sv_ex}
\end{table*}

\begin{table*}[htb]
	\caption{Ranking for the arguments of the AF in Figure~\ref{fig:exAF} obtained through the Banzhaf Index.}
	\fontsize{5}{9} \selectfont 
	\centering
	\begin{tabular}{c|c|c|c|c|c|c|c}
		\multicolumn{1}{c|}{\textbf{}} & \multicolumn{1}{c|}{\textbf{a}} & \multicolumn{1}{c|}{\textbf{b}} & \multicolumn{1}{c|}{\textbf{c}} & \multicolumn{1}{c|}{\textbf{d}} & \multicolumn{1}{c|}{\textbf{e}} & \multicolumn{1}{c|}{Semantics} & \multicolumn{1}{c}{Ranking} \\ \hline\hline
		$v^{I}_{CF}$ & $-0.06250$ & $-0.68750$ & $-0.06250$ & $-0.31250$ & $-0.31250$ & \multirow{2}{*}{$\beta-CF$} & \multicolumn{1}{c}{\multirow{2}{*}{$a \succ c \simeq e \succ d \succ b$}} \\ \cline{1-6}
		$v^{O}_{CF}$ & $-0.31250$ & $0.06250$ & $-0.18750$ & $-0.06250$ & $-0.18750$ &  & \multicolumn{1}{c}{} \\ \hline\hline
		$v^{I}_{ADM}$ & $0.06250$ & $-0.68750$ & $-0.06250$ & $-0.18750$ & $-0.18750$ & \multirow{2}{*}{$\beta-ADM$} & \multicolumn{1}{l}{\multirow{2}{*}{$a \succ c \succ d \simeq e \succ b$}} \\ \cline{1-6}
		$v^{O}_{ADM}$ & $-0.25000$ & $0.12500$ & $-0.25000$ & $-0.12500$ & $-0.12500$ &  & \multicolumn{1}{l}{} \\ \hline\hline
		$v^{I}_{COM}$ & $0.18750$ & $-0.18750$ & $0.18750$ & $-0.06250$ & $-0.06250$ &  \multirow{2}{*}{$\beta-COM$} & \multicolumn{1}{l}{\multirow{2}{*}{$a \simeq c \succ d \simeq e \succ b$}} \\ \cline{1-6}
		$v^{O}_{COM}$ & $-0.18750$ & $0.18750$ & $-0.18750$ & $-0.06250$ & $-0.06250$ &  & \multicolumn{1}{l}{} \\ \hline\hline
		$v^{I}_{PRE}$ & $0.12500$ & $-0.12500$ & $0.12500$ & $0.00000$ & $0.00000$ & \multirow{2}{*}{$\beta-PRE$} & \multicolumn{1}{l}{\multirow{2}{*}{$a \simeq c \succ d \simeq e \succ b$}} \\ \cline{1-6}
		$v^{O}_{PRE}$ & $-0.12500$ & $0.12500$ & $-0.12500$ & $0.00000$ & $0.00000$ &  & \multicolumn{1}{l}{} \\ \hline
	\end{tabular}\vspace{1em}
	\label{tab:ba_ex}
\end{table*}

\begin{table*}[htb]
	\caption{Ranking for the arguments of the AF in Figure~\ref{fig:exAF} obtained through the Deegan-Packel Index.}
	\fontsize{5}{9} \selectfont 
	\centering
	\begin{tabular}{c|c|c|c|c|c|c|c}
		\multicolumn{1}{c|}{\textbf{}} & \multicolumn{1}{c|}{\textbf{a}} & \multicolumn{1}{c|}{\textbf{b}} & \multicolumn{1}{c|}{\textbf{c}} & \multicolumn{1}{c|}{\textbf{d}} & \multicolumn{1}{c|}{\textbf{e}} & \multicolumn{1}{c|}{Semantics} & \multicolumn{1}{c}{Ranking} \\ \hline\hline
		$v^{I}_{COM}$ & $0.50000$ & $0.00000$ & $0.50000$ & $0.00000$ & $0.00000$ &  \multirow{2}{*}{$\rho-COM$} & \multicolumn{1}{l}{\multirow{2}{*}{$a \simeq c \succ b \simeq d \simeq e$}} \\ \cline{1-6}
		$v^{O}_{COM}$ & $0.00000$ & $0.00000$ & $0.00000$ & $0.00000$ & $0.00000$ &  & \multicolumn{1}{l}{} \\ \hline\hline
		$v^{I}_{PRE}$ & $0.33333$ & $0.00000$ & $0.33333$ & $0,16667$ & $0,16667$ & \multirow{2}{*}{$\rho-PRE$} & \multicolumn{1}{l}{\multirow{2}{*}{$a \simeq c \succ d \simeq e \succ b$}} \\ \cline{1-6}
		$v^{O}_{PRE}$ & $0.00000$ & $0.66667$ & $0.00000$ & $0.33333$ & $0.33333$ &  & \multicolumn{1}{l}{} \\ \hline
	\end{tabular}\vspace{1em}
	\label{tab:dp_ex}
\end{table*}

\begin{table*}[htb]
	\caption{Ranking for the arguments of the AF in Figure~\ref{fig:exAF} obtained through the Johnston Index.}
	\fontsize{5}{9} \selectfont 
	\centering
	\begin{tabular}{c|c|c|c|c|c|c|c}
		\multicolumn{1}{c|}{\textbf{}} & \multicolumn{1}{c|}{\textbf{a}} & \multicolumn{1}{c|}{\textbf{b}} & \multicolumn{1}{c|}{\textbf{c}} & \multicolumn{1}{c|}{\textbf{d}} & \multicolumn{1}{c|}{\textbf{e}} & \multicolumn{1}{c|}{Semantics} & \multicolumn{1}{c}{Ranking} \\ \hline\hline
		$v^{I}_{CF}$ & $0.00000$ & $-3.16667$ & $0.00000$ & $-2.50000$ & $-2.50000$ & \multirow{2}{*}{$\gamma-CF$} & \multicolumn{1}{c}{\multirow{2}{*}{$a \succ c \succ e \succ d \succ b$}} \\ \cline{1-6}
		$v^{O}_{CF}$ & $-2.50000$ & $1.00000$ & $-2.00000$ & $-0.50000$ & $-1.50000$ &  & \multicolumn{1}{c}{} \\ \hline\hline
		$v^{I}_{ADM}$ & $1.00000$ & $-6.16667$ & $0.00000$ & $-1.50000$ & $-1.50000$ & \multirow{2}{*}{$\gamma-ADM$} & \multicolumn{1}{l}{\multirow{2}{*}{$a \succ c \succ d \simeq e \succ b$}} \\ \cline{1-6}
		$v^{O}_{ADM}$ & $-2.00000$ & $2.00000$ & $-2.00000$ & $-1.00000$ & $-1.00000$ &  & \multicolumn{1}{l}{} \\ \hline\hline
		$v^{I}_{COM}$ & $1.50000$ & $-1.16667$ & $1.50000$ & $-0.50000$ & $-0.50000$ &  \multirow{2}{*}{$\gamma-COM$} & \multicolumn{1}{l}{\multirow{2}{*}{$a \simeq c \succ d \simeq e \succ b$}} \\ \cline{1-6}
		$v^{O}_{COM}$ & $-2.00000$ & $3.00000$ & $-2.00000$ & $-1.00000$ & $-1.00000$ &  & \multicolumn{1}{l}{} \\ \hline\hline
		$v^{I}_{PRE}$ & $0.66667$ & $-0.66667$ & $0.66667$ & $-0.16667$ & $-0.16667$ & \multirow{2}{*}{$\gamma-PRE$} & \multicolumn{1}{l}{\multirow{2}{*}{$a \simeq c \succ d \simeq e \succ b$}} \\ \cline{1-6}
		$v^{O}_{PRE}$ & $-1.00000$ & $1.00000$ & $-1.00000$ & $-0.50000$ & $-0.50000$ &  & \multicolumn{1}{l}{} \\ \hline
	\end{tabular}\vspace{1em}
	\label{tab:jo_ex}
\end{table*}

We now analyse the differences between the obtained rankings, following two levels of detail: we first compare, for each power index, the ranking obtained for all the Dung semantics. Then, for each Dung semantics, we consider the ranking obtained with respect to the different power indexes.

In this example, the Shapley Value (Table~\ref{tab:sv_ex}), provides a rankling without indifferences when the conflict-free semantics is considered. While $\phi-com$ , $\phi-pre$ and $\phi-stb$ return the same output, where in particular $c \succ d$ and $c \succ e$, the ranking for the admissible semantics gives an opposite interpretation, that is $d \succ c$ and $e \succ c$. This happens because both $\{d\}$ and $\{e\}$ are admissible extensions, while $\{c\}$ is not.
Hence, when the admissible semantics is taken into account, $d$ and $e$ are better arguments than $c$.

When Banzhaf Index is used (Table~\ref{tab:ba_ex}), such an inversion of preferences never occurs: there is no semantics for which $d \succ c$ or $e \succ c$. Looking at the formulas of the Shapley Value $\phi$ (Equation~\ref{eq:sv}) and the Banzhaf Index $\beta$ (Equation~\ref{eq:ban}), we can see that the only difference is the factor by which the gain $v(S \cup \{i\}) - v(S)$ is multiplied. Contrary to Shapley, Banzhaf does not consider the order in which the coalitions form; since the acceptability of the arguments does not depend on how the extensions are formed, $\beta$ produces more consistent results and, therefore, is a more appropriate index to be used for building a ranking-based semantics.

Using the Deegan-Packel Index for computing the ranking with respect to the conflict-free and the admissible semantics is not meaningful. Indeed, for such semantics, the empty set $\emptyset$ is always an extension, and it also represents the only minimal winning coalitions. Since $\rho$ relies on the set of minimal winning coalitions $M(v)$, when $\emptyset$ is the only element of $M(v)$, all the arguments receive a value of $0$, according to Equation~\ref{eq:dp}. For this reason, we omit to include $\rho - CF$ and $\rho - ADM$ in Table~\ref{tab:dp_ex}.


The ranking obtained through all power indexes share some common features, that we discuss below.
The argument $a$, that is not attacked by any other, is always in the first position of the rank, for every power index.
Consequently, the argument $b$, that is attacked by $a$, always results to be the worst argument in the AF, excepted for indifferences.
For the complete, preferred and stable semantics, the ranking does not distinguish between $a$ and $c$, and between $d$ and $e$. Indeed, the set ${a,c}$ corresponds to the grounded semantics, that is $a$ and $c$ are equally ``important'' and should be evaluated the same. Similarly, $e$ and $d$, that only appear in two distinct maximal admissible sets, receive the same value from all the power indexes.
Finally, since the extensions of the preferred and the stable coincide, these two semantics always provide the same final ranking.

\section{Conclusion and Future Work}\label{sec:conclusion}
We have presented an online tool capable of dealing with ranking-based semantics. The tool implements the definition of the PI-based semantics~\cite{bistarelli_cooperative-game_2018} with respect to four power indexes, namely the Shapley Value and the Banzhaf, Deegan-Packel and Johnston Index. Differently from other ranking-based semantics defined in the literature, our approach allows for distributing preferences among arguments taking into account classical Dung/Caminada semantics. In this way, we obtain a more accurate ranking with respect to the desired acceptability criterion. We have also provided an applicative example in which an AF is studied from the point of view of the different power indexes that can be used for extracting the ranking over the arguments; the tool can be used for studying properties of the ranking-based semantics and in particular the PI-based ones, where also notions from cooperative games converge.

In the future, we plan to implement other indexes in the tool, or combinations of them. As a starting point, we could use the work in~\cite{DBLP:journals/corr/abs-1812-05808}, where various power indexes are grouped according to some criteria that qualify them for certain applications. In particular, we may consider the Public Good index, that is said to detect ``special games''.
We aim to understand which ranking properties (or families of them, i.e., local or global) listed in \cite{bonzon_comparative_2016} such indexes can successfully capture. With the comparison of different indexes, we aim to determine if there is a link between ties on rankings and the possible resolution of ambiguities.
So far, we have only captured properties that are local to an argument, i.e., they can be checked by inspecting the immediate neighbourhood of an argument. Global properties derive, instead, from the whole framework structure (e.g., full attacking or defending paths), and could be exploited for further refining the ranking returned by our semantics. We are also interested in extending our work on weighted AFs, where a different notion of defence is used.

As a further step, we want to conduct a thorough study on the computational complexity of our semantics. We deal with two kind of problems: the computation of sets of extension, and of the power indexes.
For what concerns argumentation, the only problems for which efficient algorithms exist are, according to~\cite{DBLP:books/sp/09/DunneW09}: i) deciding the credulous/sceptical acceptability of an argument with respect to the grounded semantics, and ii) verifying if an extension is conflict-free, admissible, or stable. All the other problems are intractable.
On the other hand, as shown in~\cite{Matsui00asurvey}, computing the power indexes $\phi_i(v)$, $\beta_i(v)$ and $\rho_i(v)$ is $NP$-hard.
Due to the specific setting in which we use the power indexes, we think that the used formulas can be simplified, so to make the computation faster.

Another direction we plan to investigate concerns the ranking function we use for evaluating the arguments. Similarly to what is done in~\cite{derks_shapley_1993} for studying coalitions with particular properties, we want to restrict the set of possible extensions by considering only the subsets of $S$ that are in a given semantics, that is to exclude arguments that are not even credulously accepted. For instance, we could devise a PI-based semantics where the arguments are evaluated with respect to the stable semantics and the only coalitions $S$ to be taken into account are the admissible ones.
Lastly, we would like to extend the ConArg library written in C with the ranking functions for all the indexes we proposed, so to obtain a more efficient implementation for the PI-based semantics.

\bibliographystyle{plain}
\bibliography{ICTAI2019TR}

\begin{thebibliography}{10}

\bibitem{alonso-meijide_power_2015}
J.~M. {Alonso-Meijide}, M.~{\'A}lvarez-Mozos, and M.~G. {Fiestras-Janeiro}.
\newblock Power {{Indices}} and {{Minimal Winning Coalitions}} in {{Simple
  Games}} with {{Externalities}}.
\newblock {{UB Economics Working Papers}} E15/328, University of Barcelona -
  Department of Actuarial, Financial and Economic Mathematics, 2015.

\bibitem{amgoud_ranking-based_2013}
L.~Amgoud and J.~{Ben-Naim}.
\newblock Ranking-{{Based Semantics}} for {{Argumentation Frameworks}}.
\newblock In {\em Proc. {{SUM}} 2013 - 7th {{International Conference}}},
  volume 8078 of {\em {{LNCS}}}, pages 134--147. {Springer}, 2013.

\bibitem{banzhaf_weighted_1965}
J.~F. Banzhaf.
\newblock Weighted {{Voting Doesn}}'t {{Work}}: {{A Mathematical Analysis}}.
\newblock {\em Rutgers Law Rev}, 19(2):317--343, 1965.

\bibitem{baroni_introduction_2011}
P.~Baroni, M.~Caminada, and M.~Giacomin.
\newblock An introduction to argumentation semantics.
\newblock {\em Knowl Eng Rev}, 26(4):365--410, 2011.

\bibitem{besnard_logic-based_2001}
P.~Besnard and A.~Hunter.
\newblock A logic-based theory of deductive arguments.
\newblock {\em Artif Intell}, 128(1-2):203--235, 2001.

\bibitem{bistarelli_tool_2019}
S.~Bistarelli, F.~Faloci, F.~Santini, and C.~Taticchi.
\newblock A {{Tool For Ranking Arguments Through Voting}}-{{Games Power
  Indexes}}.
\newblock In {\em Proceedings 34th {CILC}}, volume 2396 of {\em {{CEUR Workshop
  Proceedings}}}, pages 193--201. {CEUR-WS.org}, 2019.

\bibitem{bistarelli_cooperative-game_2018}
S.~Bistarelli, P.~Giuliodori, F.~Santini, and C.~Taticchi.
\newblock A {{Cooperative}}-game {{Approach}} to {{Share Acceptability}} and
  {{Rank Arguments}}.
\newblock In {\em Proceedings of the 2nd {{Workshop}} on {{Advances In
  Argumentation In Artificial Intelligence}}, {{AI3}}@{{AI}}*{{IA}}}, volume
  2296 of {\em {{CEUR Workshop Proceedings}}}, pages 86--90. {CEUR-WS.org},
  2018.

\bibitem{bistarelli_probabilistic_2018}
S.~Bistarelli, T.~Mantadelis, F.~Santini, and C.~Taticchi.
\newblock Probabilistic {{Argumentation Frameworks}} with {{MetaProbLog}} and
  {{ConArg}}.
\newblock In {\em Proceedings of {{IEEE}} 30th {{International Conference}} on
  {{Tools}} with {{Artificial Intelligence}}, {{ICTAI}}}, pages 675--679.
  {IEEE}, 2018.

\bibitem{bistarelli_conarg_2016}
S.~Bistarelli, F.~Rossi, and F.~Santini.
\newblock {{ConArg}}: {{A Tool}} for {{Classical}} and {{Weighted
  Argumentation}}.
\newblock In {\em Computational {{Models}} of {{Argument}} - {{Proceedings}} of
  {{COMMA}}}, volume 287 of {\em Frontiers in {{Artificial Intelligence}} and
  {{Applications}}}, pages 463--464. {IOS Press}, 2016.

\bibitem{bistarelli_conarg_2011}
S.~Bistarelli and F.~Santini.
\newblock {{ConArg}}: {{A Constraint}}-{{Based Computational Framework}} for
  {{Argumentation Systems}}.
\newblock In {\em Proceedings of {{IEEE}} 23rd {{International Conference}} on
  {{Tools}} with {{Artificial Intelligence}}, {{ICTAI}}}, pages 605--612. {IEEE
  Computer Society}, 2011.

\bibitem{bonzon_comparative_2016}
E.~Bonzon, J.~Delobelle, S.~Konieczny, and N.~Maudet.
\newblock A {{Comparative Study}} of {{Ranking}}-{{Based Semantics}} for
  {{Abstract Argumentation}}.
\newblock In {\em Proc. Thirtieth {AAAI} Conference on Artificial
  Intelligence}, pages 914--920. {AAAI Press}, 2016.

\bibitem{caminada_issue_2006}
M.~Caminada.
\newblock On the {{Issue}} of {{Reinstatement}} in {{Argumentation}}.
\newblock In {\em Lect Notes Artif Int, 10th {{European Conference}},
  {{JELIA}}}, volume 4160 of {\em {{LNCS}}}, pages 111--123. {Springer}, 2006.

\bibitem{DBLP:conf/aaai/CyrasLMT19}
K.~{\v{C}}yras, D.~Letsios, R.~Misener, and F.~Toni.
\newblock Argumentation for explainable scheduling.
\newblock In {\em Proc. Thirty-Third {AAAI} Conference on Artificial
  Intelligence}, volume~33, pages 2752--2759. {AAAI} Press, 2019.

\bibitem{derks_shapley_1993}
J.~Derks and H.~Peters.
\newblock A {{Shapley Value}} for {{Games}} with {{Restricted Coalitions}}.
\newblock {\em Int J Game Theory}, 21(4):351--60, 1993.

\bibitem{dondio_ranking_2018}
P.~Dondio.
\newblock Ranking {{Semantics Based}} on {{Subgraphs Analysis}}.
\newblock In {\em Proc. 17th {{AAMAS}} {{International Conference}}}, pages
  1132--1140. {IFAAMAS Richland, SC, USA / ACM}, 2018.

\bibitem{dung_acceptability_1995}
P.~M. Dung.
\newblock On the {{Acceptability}} of {{Arguments}} and its {{Fundamental
  Role}} in {{Nonmonotonic Reasoning}}, {{Logic Programming}} and n-{{Person
  Games}}.
\newblock {\em Artif Intell}, 77(2):321--358, 1995.

\bibitem{DBLP:books/sp/09/DunneW09}
P.~E. Dunne and M.~J. Wooldridge.
\newblock Complexity of abstract argumentation.
\newblock In {\em Argumentation in Artificial Intelligence}, pages 85--104.
  Springer, 2009.

\bibitem{duran_computing_2003}
E.~A. Dur{\'a}n, J.~M. Bilbao, J.~R.~F. Garc{\'i}a, and J.~J. L{\'o}pez.
\newblock Computing power indices in weighted multiple majority games.
\newblock {\em Math Soc Sci}, 46(1):63--80, 2003.

\bibitem{Gabbriellini2013AbstractAF}
S.~Gabbriellini and P.~Torroni.
\newblock Abstract argumentation for agent-based social simulations.
\newblock In {\em Proc. {{10th ArgMAS International Workshop}}}, pages 1--14,
  2013.

\bibitem{grossi_graded_2015}
D.~Grossi and S.~Modgil.
\newblock On the {{Graded Acceptability}} of {{Arguments}}.
\newblock In {\em Proc. {{IJCAI-15}}}, pages 868--874. {AAAI Press}, 2015.

\bibitem{keith_encyclopedia_2011}
D.~Keith.
\newblock {\em Encyclopedia of {{Power}}}.
\newblock {SAGE Publications, Inc.}, 2011.

\bibitem{DBLP:journals/corr/abs-1812-05808}
S.~Kurz.
\newblock Which criteria qualify power indices for applications? - {A} comment
  to ``the story of the poor public good index''.
\newblock {\em CoRR}, abs/1812.05808, 2018.

\bibitem{leite_social_2011}
J.~Leite and J.~Martins.
\newblock Social {{Abstract Argumentation}}.
\newblock In {\em {{Proc.}} {{IJCAI-11}}}, pages 2287--2292. {IJCAI/AAAI},
  2011.

\bibitem{DBLP:conf/cbms/LongoKH12}
L.~Longo, B.~Kane, and L.~Hederman.
\newblock Argumentation theory in health care.
\newblock In {\em Proc. {CBMS} 2012, 25th {IEEE} International Symposium},
  pages 1--6. {IEEE} Computer Society, 2012.

\bibitem{Matsui00asurvey}
T.~Matsui and Y.~Matsui.
\newblock A survey of algorithms for calculating power indices of weighted
  majority games.
\newblock {\em J Oper Res Soc Jpn}, 43:71--86, 2000.

\bibitem{matt_game-theoretic_2008}
P.~Matt and F.~Toni.
\newblock A {{Game}}-{{Theoretic Measure}} of {{Argument Strength}} for
  {{Abstract Argumentation}}.
\newblock In {\em Lec Notes Artif Int, 11th {{European Conference}},
  {{JELIA}}}, volume 5293 of {\em {{LNCS}}}, pages 285--297. {Springer}, 2008.

\bibitem{shapley_contributions_1953}
L.~S. Shapley.
\newblock {\em Contributions to the {{Theory}} of {{Games}}}, volume~II of {\em
  {{AM}}-28}.
\newblock {Princeton University Press}, 1953.

\bibitem{WINTER20022025}
E.~Winter.
\newblock The shapley value.
\newblock In {\em Handbook of {{Game Theory}} with {{Economic Applications}}},
  volume~3, pages 2025--2054. {Elsevier}, 2002.

\end{thebibliography}

\end{document}